\documentclass{article}


\usepackage[final]{neurips_2025}




\usepackage[utf8]{inputenc} 
\usepackage[T1]{fontenc}    
\usepackage{hyperref}       
\usepackage{url}            
\usepackage{booktabs}       
\usepackage{amsfonts}       
\usepackage{nicefrac}       
\usepackage{microtype}      
\usepackage{xcolor}         
\usepackage{todonotes}
\usepackage{amsmath}
\usepackage{amsthm}
\usepackage{bbm}
\usepackage{pifont}
\usepackage{multirow}
\usepackage{subcaption}
\usepackage{mathabx}
\usepackage{amssymb}
\usepackage{cleveref}
\usepackage{array}

\definecolor{OliveGreen}{rgb}{0,0.6,0}

\newcommand{\mytilde}{\raise.17ex\hbox{$\scriptstyle\mathtt{\sim}$}}

\newcommand{\Ni}{({\em i})~}
\newcommand{\Nii}{({\em ii})~}
\newcommand{\Niii}{({\em iii})~}

\newtheorem{claim}{Claim}
\newtheorem{result}{Result}

\title{Dependency Parsing\\is More Parameter-Efficient with Normalization}

\author{%
  Paolo Gajo
    \\
  University of Bologna\\
  \texttt{paolo.gajo2@unibo.it} \\
  \And
  Domenic Rosati \\
  Dalhousie University \\
  \texttt{Domenic.Rosati@Dal.Ca} \\
  \AND
  Hassan Sajjad \\
  Dalhousie University \\
  \texttt{HSajjad@dal.ca} \\
  \And
  Alberto Barrón-Cedeño \\
  University of Bologna \\
  \texttt{a.barron@unibo.it} \\
}

\begin{document}

\maketitle
 
\begin{abstract}
Dependency parsing is the task of inferring natural language structure, often approached by modeling word interactions via attention through biaffine scoring. 
This mechanism works like self-attention in Transformers, where scores are calculated for every pair of words in a sentence. However, unlike Transformer attention, biaffine scoring does not use normalization prior to taking the softmax of the scores. In this paper, we provide theoretical evidence and empirical results revealing that a lack of normalization necessarily results in overparameterized parser models, where the extra parameters compensate for the sharp softmax outputs produced by high variance inputs to the biaffine scoring function. We argue that biaffine scoring can be made substantially more efficient by performing score normalization.
We conduct experiments on semantic and syntactic dependency parsing in multiple languages, along with latent graph inference on non-linguistic data, using various settings of a $k$-hop parser.
We train $N$-layer stacked BiLSTMs and  evaluate the parser's performance with and without normalizing biaffine scores.
Normalizing allows us to achieve state-of-the-art performance with fewer samples and trainable parameters.
Code: \url{https://github.com/paolo-gajo/EfficientSDP}
\end{abstract}

\section{Introduction}
\label{sec:intro}

Dependency parsing (DP) consists in classifying node labels $t_i$ (words), edges $e_{ij}$ (relations), and edge labels $r_{ij}$ (relation types) of a dependency graph~\cite{qi2019universal}. A popular model for this task, introduced by Dozat and Manning~\cite{dozat_deep_2017}, entails modeling word interactions as a fully connected graph via biaffine attention.
Despite its simplicity, a number of models using this biaffine transformation~\cite{bhatt_end--end_2024,dozat_deep_2017,dozat_simpler_2018,ji_graph-based_2019,jiang_entity-relation_2024} require more parameters than necessary, due to what we identify as a lack of normalization of its outputs.
We hypothesize that this overparameterization is caused by the high variance of the outputs, which extra parameters help mitigate. After showing that variance can be reduced through normalization, we demonstrate that similar or better performance can be obtained for DP with fewer parameters and training samples.

In this task, we consider a graph $\mathcal{G} = (\mathcal{V}, \mathcal{E})$ comprising a set of nodes $\mathcal{V}$, connected by a set of edges $\mathcal{E}$, with each edge $e_{ij} \in \mathcal{E}$ connecting pairs of nodes $(v_i, v_j) \in \mathcal{V}$. In latent graph inference (LGI) for dependency graphs, a sentence of $|\mathcal{V}|$ words is modeled as a $|\mathcal{V}|\times |\mathcal{V}|$ graph via biaffine scoring~\cite{dozat_deep_2017}:
\begin{equation*}
X W X^\top = X (W_Q W_K^\top) X^\top = XW_Q (XW_K)^\top = QK^{\top}, \quad X \in \mathbb{R}^{|\mathcal{V}|\times d}.
\label{eq:weight-equivalance}
\end{equation*}
Observe that this is equivalent to the unnormalized scores used in~\cite{vaswani_attention_2017}'s self-attention: \(
    QK^\top  / \sqrt{d_k} \\
\) where \(
\text{Attention}(Q, K, V) = \text{Softmax}\left( QK^\top  / \sqrt{d_k} \right)V
\) and $d_k$ is the output size of the keys.




The main difference resides in the fact that Transformer attention scores are scaled by $a = 1 / \sqrt{d_k}$.
As explained in \cite{vaswani_attention_2017}, this scaling is done because high variance inputs to the softmax function will result in large values dominating the output ($e^{x \gg 1}$) and small values decaying ($e^{x \ll 1}$). The downstream effect is exploding and vanishing gradients. To see how this works, observe that the score $s$ between any query and key vector is the result of their dot product $q_ik_i^{\top}$.
If we assume that these vectors are zero mean and unit variance, then the variance is $\text{Var}(s) = d_k$. Finally, we get our normalization factor, which ensures that each entry in the score matrix has a standard deviation of 1, since $\text{Std} = \sqrt{d_k}$. Consequently, lower input variance will result in more stable outputs.




We observe that there is no consistency in the literature on the use of this scaling term for DP. 
Most works do not use this scaling term
~\cite{bhatt_end--end_2024,dozat_deep_2017,dozat_simpler_2018,ji_graph-based_2019,jiang_entity-relation_2024}, with the exception of~\cite{tang-etal-2022-unirel} who use~\cite{vaswani_attention_2017}'s self-attention out of the box.
In this paper, we carry out a series of experiments on semantic and syntactic DP tasks, along with non-linguistic LGI tasks, using a wide range of architecture ablations. We extend the work of Bhatt~et~al.~\cite{bhatt_end--end_2024}, which represents the state of the art on dependency graph parsing in NLP.

Following the state of the art, we train stacked BiLSTMs and a biaffine classifier to
infer the latent dependency graph connecting the words of a sentence. We find that, when not normalizing the scores produced by the biaffine transformation, model performance drops in terms of micro-averaged F$_1$-measure and attachment score~\cite{nivre2017universal}. In particular, increasing the amount of layers produces a normalization effect by reducing the variance of the output scores.
Using score normalization, we find that in some cases similar or better performance can be obtained by reducing the amount of trained BiLSTM parameters by as much as 85\%.


Our contributions are thus three-fold.
\Ni We show the impact of normalizing the output of the biaffine scorer in relation to the architectural changes in a DP model.
\Nii We show DP models can obtain better performance with substantially fewer trained parameters.
\Niii We provide a new method that substantially improves scores and obtains state-of-the-art performance for semantic~\cite{luan2018scierc,yamakata_english_2020} and syntactic~\cite{ud22,yang2018scidtb} dependency parsing.

The rest of the paper is structured as follows. Section~\ref{sec:bg} presents an overview of the literature concerning DP, with specific focus on how to infer the adjacency matrix of a dependency graph.
Section~\ref{sec:math} explains why layer depth can compensate for normalization.
Section~\ref{sec:exp-setting} provides an overview of the datasets, models, evaluation, and other experimental details.
Section~\ref{sec:results} illustrates how normalization mitigates overparameterization. Finally, Section~\ref{sec:conclusions} draws conclusions and provides avenues for future research.



\section{Background}
\label{sec:bg}

LGI involves predicting all, or a proper subset of, the edges between the nodes of a graph~\cite{kazi_differentiable_2023,lu2023latent,velickovic_pointer_2020}. For example, one could carry out LGI over molecule graphs~\cite{ramakrishnan2014qm9} to predict the bonds between atoms or the proximity of pixel clusters in an image~\cite{dwivedi2023benchmarking}.

In the context of NLP, DP is a task concerned with inferring the dependency graph of a sentence~\cite{qi2019universal}. Depending on the nature of the nodes and relations of the graph, it can be described as semantic (SemDP)~\cite{bhatt_end--end_2024,dozat_simpler_2018,jiang_entity-relation_2024,tang-etal-2022-unirel} or syntactic (SynDP)~\cite{dozat_deep_2017,ji_graph-based_2019,yang2018scidtb}. In this work, we tackle both tasks, but focus more specifically on SemDP,
which can be thought of as comprising two sub-tasks: named entity recognition (NER)~\cite{nadeau_survey_2009} and relation extraction (RE)~\cite{bhatt_end--end_2024,jiang_entity-relation_2024}.
They can be either approached in a pipeline as separate objectives~\cite{chan2011exploiting,yan-etal-2021-partition,zelenko2003kernel,zhong-chen-2021-frustratingly} or by jointly training a single model on both sub-tasks~\cite{bhatt_end--end_2024,bi_codekgc_2024,jiang_entity-relation_2024,zaratiana_autoregressive_2024}.
In the context of deep learning models trained end-to-end, three main paradigms are used: encoder-based models~\cite{bhatt_end--end_2024,dozat_deep_2017,dozat_simpler_2018,ji_graph-based_2019,jiang_entity-relation_2024,wadden-etal-2019-entity,wang-lu-2020-two}, decoder-only Transformers, more commonly known as large language models (LLMs)~\cite{bi_codekgc_2024,wei_chatie_2024,zhang_extract_2024}, and seq2seq encoder-decoder Transformers~\cite{liu-etal-2022-autoregressive,lu-etal-2022-unified,paolini2021structured,zaratiana_autoregressive_2024}.


In this work, we focus on encoder-style models, since they currently achieve the best performance on DP tasks~\cite{bhatt_end--end_2024,jiang_entity-relation_2024,tang-etal-2022-unirel} and are much more parameter efficient than LLM-based solutions. These models approach entity (node) prediction analogously to NER, while edges and relations are
handled via MLP projection of node-pair features~\cite{kiperwasser_simple_2016,pei_effective_2015,peng-etal-2017-deep,zhu-etal-2019-graph} or via attention-based edge and relation inference~\cite{bhatt_end--end_2024,dozat_simpler_2018,ji_graph-based_2019,jiang_entity-relation_2024,tang-etal-2022-unirel}. Current state-of-the-art models, exemplified by \cite{bhatt_end--end_2024}, are based on the biaffine dependency parser introduced by~\cite{dozat_deep_2017}, which uses stacked BiLSTM layers on top of embeddings from a pre-trained language model.

\section{Layer depth can compensate for normalization}
\label{sec:math}

In order to reduce the number of parameters used for biaffine scoring, we observe that the softmax function is sensitive to inputs with high variance: large values dominate the output ($e^{x \gg 1}$) and small values decay ($e^{x \ll 1}$). This causes a drop in the downstream task performance, 
due to some values dominating the probability outputs in the score matrix, as well as exploding and vanishing gradients. Typically,  contemporary architectures based on \cite{dozat_deep_2017} do not employ normalization, but are still able to perform well on DP tasks.

We explain this discrepancy using insights from the theory of implicit regularization~\cite{arora_implicit_2019}, which contains the result that, as layer depth increases, the effective rank of the weight matrices is reduced during gradient descent. Our claim is that a reduction in the rank of the weight matrices causes a corresponding decrease in
the variance of the input to the softmax function.

\begin{result}[Singular Value Dynamics Under Gradient Descent \cite{arora_implicit_2019}]
\label{res:singular_value_dynamics_gd}
Minimization of a loss function $\mathcal{L}(W)$ with gradient descent using weight matrices $\mathbf{W}$ (assuming a small learning rate $\eta$ and initialization close to the origin). An $N$-layer linear neural network leads the singular values $\sigma_r$ of\, $\mathbf{W}$ to evolve in the number of iterations $t$ by:
\[
\sigma_r(t+1) \gets \sigma_r(t) - \eta \cdot \langle\nabla \mathcal{L}(W(t)), \mathbf{u}_r(t)\mathbf{v}_r^{\top}(t)\rangle \cdot N \cdot \sigma_r(t)^{2 - 2 /N},
\]
where $\mathbf{u}_r\mathbf{v}_r$ are the corresponding right and left singular vectors. This implies that, as the number of layers increases, the rank of the weight matrices decreases, since the smallest singular values decay.
\end{result}


\begin{figure}
    \centering
    \includegraphics[width=\linewidth]{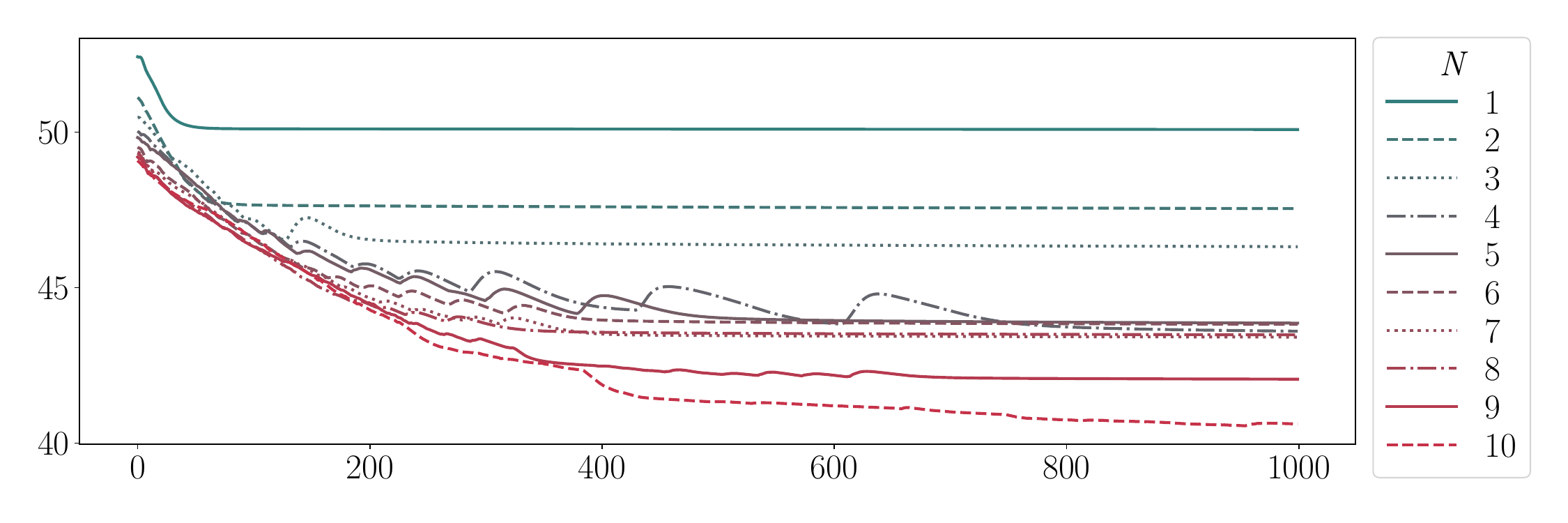}
    \caption{Effective rank $\rho(W)$ reduces over training epochs as we increase $N$ of BiLSTM layers.}
    \label{fig:effective-ranks-per-layer}
\end{figure}

{While Result~\ref{res:singular_value_dynamics_gd}} is stated for deep linear neural networks for the matrix factorization task, it has been validated empirically as applying to deep non-linear neural networks~\cite{razin2020implicit,zhao2022combining}. We validate this finding for the biaffine scoring setting in Figure~\ref{fig:effective-ranks-per-layer},  which shows the inverse proportionality between the effective rank $\rho(W)$ \cite{roy2007effective} of the weights of a BiLSTM and the number $N$ of its layers.

\begin{claim}[Monotonic Increase of Output Variance with Rank]
\label{claim:rank_variance}

Let $X \in \mathbb{R}^n$ be a random vector with covariance matrix $\mathbf{K}_{xx} := \text{Cov}(X) \in \mathbb{R}^{n \times n}$, and let $\mathbf{A} \in \mathbb{R}^{m \times n}$ be a fixed matrix with singular value decomposition (SVD):
\[
    \mathbf{A} = \sum_{i=1}^{min(m, n)} \sigma_i \mathbf{u}_i \mathbf{v}_i^{\top},
\] where \(\sigma_1 \geq \sigma_2 \geq ... \geq 0\). Define the best rank-$r$ approximation of $\mathbf{A}$ by truncated SVD as:
\[
  \mathbf{A}_r := \sum_{i=1}^r \sigma_i \mathbf{u}_i \mathbf{v}_i^{\top}.
\]
Let $Y_r = \mathbf{A}_rX \in \mathbb{R}^m$ denote the image of $X$ under the rank-$r$ linear map. Then, the total variance of $Y_r$, as measured by the trace of its covariance matrix, increases monotonically with $r$:
\[
\text{tr}(\text{Cov}(Y_r)) \leq \text{tr}(\text{Cov}(Y_{r + 1})).
\]
\end{claim}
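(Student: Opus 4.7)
The plan is to expand $\text{Cov}(Y_r)$ directly in terms of the SVD of $\mathbf{A}$, and then show that increasing the truncation rank from $r$ to $r+1$ adds a single nonnegative term to the trace. First I would write $\text{Cov}(Y_r) = \mathbf{A}_r \mathbf{K}_{xx} \mathbf{A}_r^{\top}$ using standard linearity of covariance, and substitute the truncated SVD $\mathbf{A}_r = \sum_{i=1}^r \sigma_i \mathbf{u}_i \mathbf{v}_i^{\top}$ on both sides. This yields a double sum
\[
\mathbf{A}_r \mathbf{K}_{xx} \mathbf{A}_r^{\top} = \sum_{i=1}^r \sum_{j=1}^r \sigma_i \sigma_j \bigl(\mathbf{v}_i^{\top}\mathbf{K}_{xx}\mathbf{v}_j\bigr)\, \mathbf{u}_i \mathbf{u}_j^{\top}.
\]

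Next I would take the trace and exploit the cyclic property together with orthonormality of the left singular vectors: $\text{tr}(\mathbf{u}_i \mathbf{u}_j^{\top}) = \mathbf{u}_j^{\top}\mathbf{u}_i = \delta_{ij}$. This collapses the double sum to a single sum and gives the clean identity
\[
\text{tr}\bigl(\text{Cov}(Y_r)\bigr) \;=\; \sum_{i=1}^r \sigma_i^2 \,\mathbf{v}_i^{\top}\mathbf{K}_{xx}\mathbf{v}_i.
\]
From here the claim is immediate: subtracting the analogous expression for $r$ from the one for $r+1$ leaves exactly one term,
\[
\text{tr}\bigl(\text{Cov}(Y_{r+1})\bigr) - \text{tr}\bigl(\text{Cov}(Y_r)\bigr) \;=\; \sigma_{r+1}^2 \,\mathbf{v}_{r+1}^{\top}\mathbf{K}_{xx}\mathbf{v}_{r+1},
\]
which is nonnegative because $\sigma_{r+1}^2 \geq 0$ and $\mathbf{K}_{xx}$, as a covariance matrix, is positive semidefinite, so the quadratic form $\mathbf{v}_{r+1}^{\top}\mathbf{K}_{xx}\mathbf{v}_{r+1} \geq 0$.

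Honestly there is no substantial obstacle here; the entire argument is a book-keeping exercise once the SVD is plugged in. The one place to be mildly careful is ensuring the trace identity really does collapse cleanly, which requires the orthonormality of the $\mathbf{u}_i$ (guaranteed by the SVD) rather than any assumption on the $\mathbf{v}_i$ relative to $\mathbf{K}_{xx}$; the $\mathbf{v}_i$ need not be eigenvectors of $\mathbf{K}_{xx}$ for the bound to hold. It is also worth noting that the result is tight in the sense that equality holds when $\sigma_{r+1} = 0$ or when $\mathbf{v}_{r+1} \in \ker(\mathbf{K}_{xx})$, which together with Result~\ref{res:singular_value_dynamics_gd} is precisely the point: as depth $N$ drives the tail singular values toward zero, the contributions $\sigma_{r+1}^2 \mathbf{v}_{r+1}^{\top}\mathbf{K}_{xx}\mathbf{v}_{r+1}$ that inflate the variance of the biaffine scores are suppressed, explaining why deeper stacks behave like an implicit normalizer.
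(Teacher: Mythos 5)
Your proof is correct and follows essentially the same route as the paper's: both reduce the statement to the observation that passing from $r$ to $r+1$ adds exactly $\sigma_{r+1}^2\,\mathbf{v}_{r+1}^{\top}\mathbf{K}_{xx}\mathbf{v}_{r+1} \geq 0$ to the trace, using the cyclic property and positive semi-definiteness of $\mathbf{K}_{xx}$. The only cosmetic difference is that you collapse the double sum explicitly via orthonormality of the $\mathbf{u}_i$, whereas the paper packages the same computation as $\mathbf{M}_{r+1} \succeq \mathbf{M}_r$ plus the Loewner-order monotonicity of $\text{tr}(\mathbf{K}_{xx}\,\cdot)$.
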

\begin{proof}
Let \(\mathbf{K}_{xx} := \text{Cov}(X) \in \mathbb{R}^{n \times n}\) denote the covariance matrix of the random vector \(X\). Since covariance matrices are symmetric and positive semi-definite (PSD), we have \(\mathbf{K}_{xx} \succeq 0\). Let \(\mathbf{A}_r := \sum_{i=1}^r \sigma_i \mathbf{u}_i \mathbf{v}_i^{\top}\) be the rank-$r$ truncated SVD of $\mathbf{A}$, the covariance matrix of $Y_r := \mathbf{A}_r X$ is:
\[
    \text{Cov}(Y_r) = \mathbf{A}_r \mathbf{K}_{xx} \mathbf{A}_r^{\top}.
\]
To evaluate the total variance we compute the trace:
\[
    \text{tr}\left(\text{Cov}(Y_r)\right)  = \text{tr}\left(\mathbf{A}_r\mathbf{K}_{xx} \mathbf{A}_r^{\top}\right).
\]
Using the cyclic property of the trace \(\left(\text{tr}(ABC) = \text{tr}(BCA)\right)\) and symmetry of $\mathbf{K}_{xx}$ we get:
\[
    \text{tr}\left(\mathbf{A}_r\mathbf{K}_{xx} \mathbf{A}_r^{\top}\right)  = \text{tr}\left(\mathbf{K}_{xx} \mathbf{A}_r^{\top} \mathbf{A}_r \right).
\]
Now define the matrix $\mathbf{M}_r := \mathbf{A}_r^{\top} \mathbf{A}_r \in \mathbb{R}^{n \times n}$, which is PSD. As $r$ increases, \(\mathbf{A}_r\) includes more terms in its truncated SVD, so we have:
\[
\mathbf{M}_{r} = \sum_{i=1}^r \sigma^2_i \mathbf{v}_{i} \mathbf{v}_{i}^{\top}, \quad \mathbf{M}_{r+1} = \mathbf{M}_r + \sigma^2_{r+1} \mathbf{v}_{r+1} \mathbf{v}_{r+1}^{\top}.
\]
Thus:
\[
\mathbf{M}_{r+1} \succeq \mathbf{M}_r.
\]
Since $\mathbf{K}_{xx} \succeq 0$, and since the trace of a product of PSD matrices respects Loewner order (i.e., if $A \preceq B$, then $\text{tr}(CA) \leq \text{tr}(CB)$ for all \(C \succeq 0\)), we conclude:
\[
\text{tr}(\mathbf{K}_{xx} \mathbf{M}_r) \leq \text{tr}(\mathbf{K}_{xx}\mathbf{M}_{r+1}),
\] 
which implies:
\[
\text{tr}\left(\text{Cov}(Y_r)\right) \leq \text{tr}\left(\text{Cov}(Y_{r+1})\right).
\] 
Therefore, as the rank $r$ increases, the total variance $Y_r$ increases.
\end{proof}

The implication of Claim~\ref{claim:rank_variance} is that the pre-softmax score matrix from a shallow network without normalization will have a higher variance than a deeper network because of Result~\ref{res:singular_value_dynamics_gd}. Based on this finding, we propose to remove BiLSTM layers and use normalization ---rather than having
greater layer depth--- in order to develop a more parameter-efficient method.

\section{Experimental setting}
\label{sec:exp-setting}

\subsection{Data}
\label{sec:exp-setting:data}

To train and evaluate our models, we use four SemDP datasets,\footnote{We neglect ACE04~\cite{mitchell2005ace} and ACE05~\cite{walker2006ace}, two popular datasets for RE, due to their prohibitive cost.} along with the 2.2 version of the Universal Dependencies English EWT treebank (enEWT)~\cite{ud22} and SciDTB~\cite{yang2018scidtb} for SynDP.
Table~\ref{tab:datasets} summarizes datasets statistics and reports their entity and relation class annotations.

As regards SemDP, \textbf{ADE}~\cite{gurulingappa2012ade} is a medical-domain dataset comprising reports of drug adverse-effect reactions. Each sample contains a single relation, ``adverseEffect,''  which links a disease to a drug.
\textbf{CoNLL04}~\cite{roth2004conll04} contains news texts and is annotated with the classic named entities $e_i\in$ \{per, org, loc\} and relations $r_j \in$ \{workFor, kill, orgBasedIn, liveIn, locIn\}. ADE and CoNLL04 are characterized by relations which are not complex enough to form connected graphs of considerable size.
\textbf{SciERC}~\cite{luan2018scierc} is a dataset compiled from sentences extracted from artificial intelligence literature.
It is arguably the most challenging dataset used in this study, since most of the entities in the validation and testing partitions are not assigned an entity class. This makes it hard to train tag embeddings that can help to infer edges.
\textbf{ERFGC}~\cite{yamakata_english_2020} is a dataset comprising ``flow graphs'' parsed from culinary recipes.
The semantic dependency graphs of these recipes are directed and acyclic, with a single root. Differently from~\cite{bhatt_end--end_2024}, and as advised directly by~\cite{yamakata_english_2020} (personal correspondence), we ignore the ``\texttt{-}'' edge labels present in the corpus, since they link discontinuous parts of phrasal verbs and are thus irrelevant.
For ADE, CoNLL04, and SciERC we use the splits provided in~\cite{bi_codekgc_2024}.\footnote{\url{https://drive.google.com/drive/folders/1vVKJIUzK4hIipfdEGmS0CCoFmUmZwOQV}} ERFGC is not available online; we obtained it by contacting the authors of~\cite{yamakata_english_2020}.

As regards SynDP, we use \textbf{enEWT}~\cite{ud22} to compare directly against~\cite{ji_graph-based_2019}'s results. While many works evaluate SynDP on the Penn Treebank~\cite{prasad-etal-2008-penn}, it is closed-access and prohibitively expensive. Instead, we use \textbf{SciDTB}~\cite{yang2018scidtb}, a discourse analysis dataset comprising 798 abstracts extracted from the ACL Anthology.\footnote{\url{https://aclanthology.org}} It was processed for the syntax dependency parsing task using Stanza~\cite{qi2020stanza}. From these two datasets, we consider the xPOS tags (e.g., noun, preposition, determiner) and the syntactic dependencies (e.g., sentence root, object, adverbial modifier) to respectively be the entities and relations of the dependency graphs to infer. 

In order to show that the normalization effect is language-independent, in Appendix~\ref{sec:multilingual-results} we also carry out SynDP experiments using Universal Dependencies~\cite{nivre-etal-2020-universal}
datasets in six other languages.
Finally, we also conduct experiments on three non-linguistic datasets: PCQM-Contact~\cite{dwivedi2022lrgb}, CIFAR10 Superpixel~\cite{dwivedi2023benchmarking}, and QM9~\cite{ramakrishnan2014qm9}. PCQM-Contact and QM9 are datasets which contain graphs of molecules, while CIFAR10 Superpixel contains superpixel clusters constructed from CIFAR10 images. We use these datasets to further show that the observed effects of normalization are not limited to linguistic dependency parsing, but are a general phenomenon of carrying out inference of fully connected graphs. We report the results of this additional experiment in Appendix~\ref{sec:non-linguistic-results}.

\begin{table}[t]
\centering
\caption{Size of the train/dev/test splits and entity/relation classes for each dataset.}
\label{tab:datasets}
\begin{tabular}{l@{\hspace{0mm}}p{5cm}@{\hspace{3mm}}p{5cm}}
\bf Data & \bf Entities & \bf Relations \\
\toprule
\parbox[t]{3cm}{
\makebox[2.1cm][l]{\bf ADE}~\cite{gurulingappa2012ade}\\[-1pt]{\scriptsize {\small 2,563 / 854 / 300}}}
& disease, drug
& adverseEffect \\
\midrule
\parbox[t]{3cm}{
\makebox[2.1cm][l]{\bf CoNLL04}~\cite{roth2004conll04}\\[-1pt]{\scriptsize {\small 922 / 231 / 288}}}
& organization, person, location
& kill, locatedIn, workFor, orgBasedIn, liveIn \\
\midrule
\parbox[t]{3cm}{
\makebox[2.1cm][l]{\bf SciERC}~\cite{luan2018scierc}\\[-1pt]{\scriptsize {\small 1,366 / 187 / 397}}}
& generic, material, method, metric, otherSciTerm, task
& usedFor, featureOf, hyponymOf, evaluateFor, partOf, compare, conjunction \\
\midrule
\parbox[t]{3cm}{
\makebox[2.1cm][l]{\bf ERFGC}~\cite{yamakata_english_2020}\\[-1pt]{\scriptsize {\small 242 / 29 / 29}}}
& food, tool, duration, quantity, actionByChef, discontAction, actionByFood, actionByTool, foodState, toolState
& agent, target, indirectObject, toolComplement, foodComplement, foodEq, foodPartOf, foodSet, toolEq, toolPartOf, actionEq, timingHeadVerb, other \\
\midrule
\parbox[t]{3cm}{
\makebox[2.1cm][l]{\bf enEWT}~\cite{ud22}\\[-1pt]{\scriptsize {\small 10,098 / 1,431 / 1,427}}}
& xPOS tags
& UD relations \\
\midrule
\parbox[t]{3cm}{
\makebox[2.1cm][l]{\bf SciDTB}~\cite{yang2018scidtb}\\[-1pt]{\scriptsize {\small 2,567 / 814 / 817}}}
& xPOS tags
& UD relations \\
\bottomrule
\end{tabular}
\end{table}

\subsection{Model}
\label{sec:exp-setting:model}

We adopt the architecture of \cite{bhatt_end--end_2024} in our work, schematized in Figure~\ref{fig:model_diagram}. It can be subdivided into four main components: encoder, tagger, parser, and decoder.
The input is tokenized and passed through a BERT-like encoder, where token representations are averaged into $|\mathcal{V}|$ word-level features $\textbf{x}_i \in \mathbb{R}^{d_f}$.\footnote{Using token-level representations resulted in much lower performance in preliminary experiments.}
Optionally, additional features can be obtained by predicting the entity classes of each word with 
a tagger, composed by a single-layer BiLSTM $\phi$, followed by a classifier:
\[
\begin{aligned}
    \textbf{h}_i^{tag} =~&\phi(\textbf{x}_i),\quad\textbf{h}^{tag}_i\in \mathbb{R}^{d_h},\\
    \textbf{y}_i^{tag} =~&\text{Softmax}(\text{MLP}^{tag}(\textbf{h}_i^{tag})), \quad \textbf{y}_i^{tag} \in \mathbb{R}^{\mid  T \mid},
\end{aligned}
\]
where $T$ is the set of word tag classes. The tagger's predictions are then converted into one-hot vectors and projected into dense representations by another MLP, such that $\textbf{e}_i^{tag} = \text{MLP}^{emb}(\mathbf{1}_T(\textbf{y}_i^{tag}))$.
These new tag embeddings are concatenated with the original BERT output and sent to the parser.


\begin{figure}
    \centering
    \includegraphics[width=0.75\linewidth]{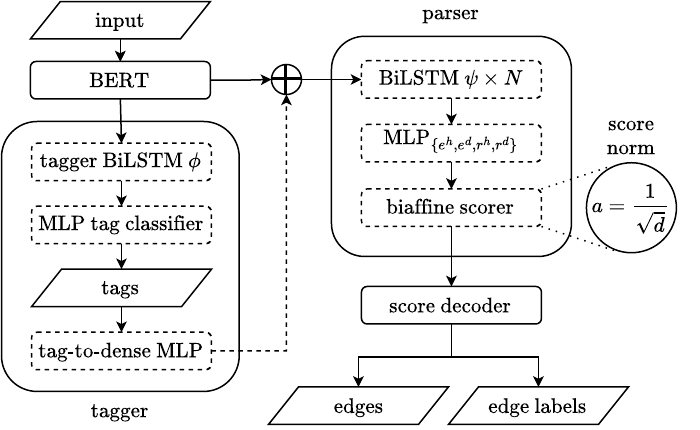}
    \caption{Dependency parsing diagram. Dashed components are the targets of  ablation experiments.}
    \label{fig:model_diagram}
\end{figure}

In the parser, an optional $N$-layered BiLSTM $\psi$ produces new representations $\textbf{h}_i = \psi(\textbf{e}_i^{tag} \oplus \textbf{x}_i)$,
which are then projected into four different representations:
\[
\begin{aligned}
\textbf{e}_i^h &= \text{MLP}^{(edge-head)}(\textbf{h}_i), &&
\textbf{e}_i^d = \text{MLP}^{(edge-dept)}(\textbf{h}_i), \\
\textbf{r}_i^h &= \text{MLP}^{(rel-dept)}(\textbf{h}_i), &&
\textbf{r}_i^d = \text{MLP}^{(rel-head)}(\textbf{h}_i).
\end{aligned}
\]
The edge scores $s_i^{edge}$ and relation scores $s_i^{rel}$ are then calculated with the biaffine function $f$:
\[
f(\textbf{x}_1, \textbf{x}_2;W) = \textbf{x}_1^{\top} W \textbf{x}_2 + \textbf{x}_1^{\top}\textbf{b},
\]
\[
\begin{aligned}
    s_i^{edge} & = f^{(edge)}(\textbf{e}_i^{h}, \textbf{e}_i^{d};W_e), && W_e \in \mathbb{R}^{d \times 1 \times d}, \\
    s_i^{rel}~~  & = f^{(rel)}(\textbf{r}_i^{h}, \textbf{r}_i^{d};W_r), && W_r \in \mathbb{R}^{d \times |R| \times d},
\end{aligned}
\]
where $R$ is the set of relation classes, i.e., the possible labels applied to an edge.
In Appendix~\ref{sec:gat-results}, following~\cite{ji_graph-based_2019}, we extend the architecture and instead feed the representations produced by the BERT encoder into $L_\gamma \in \{0, 1, 2, 3\}$ pairs of biaffine and  graph attention network (GAT)~\cite{veličković2018graph} layers. This allows us to see the influence of normalization when using a $k$-hop parser.

Finally, in the decoder, the edge scores are used in conjunction with the relation representations $\textbf{r}_i^h$ and $\textbf{r}_i^d$ to obtain the final predictions.
During training, we do greedy decoding, while during inference, we use Chu-Liu/Edmonds' maximum spanning tree (MST) algorithm~\cite{edmonds_optimum_1967} to ensure the predictions are well-formed trees.
This is especially useful with big dependency graphs, since greedy decoding is more likely to produce invalid trees as size increases.
When doing greedy decoding, an edge index (i.e., an adjacency matrix) $a_{i} = \arg\max_j s_{ij}^{edge}$ is produced by taking the argmax of the attention scores $s_i^{edge}$ across the last dimension. The edge index is then used to select which head relation representations $\mathbf{r}_i^{h}$ to use to calculate the relation scores $s_i^{rel} = f(\textbf{r}_i^{h}, \textbf{r}_i^{d};W), \quad W \in \mathbb{R}^{d \times \mid R\mid \times d}$. The relations are then predicted as $r_{i} = \arg\max_j s_{ij}^{rel}$.
When using MST decoding, edge and relation scores are combined into a single energy matrix where each entry represents the score of a specific head-dependent pair with its most likely relation type. This energy matrix is then used in the MST algorithm, producing trees with a single root and no cycles. Prior to energy calculation, edge scores and relation scores are scaled so that low values are squished and high values are increased, making the log softmax produce a hard adjacency matrix.

The model is trained end-to-end jointly on the entity, edge, and relation classification objectives:
\[
\begin{aligned}
  \mathcal{L}_{tag}
    &= -\frac{1}{|\mathcal{V}|}
       \sum_{i=1}^{|\mathcal{V}|}\sum_{t=1}^{|T|}
         y_{i,t}^{tag}\,
         \log p\bigl(y_{i,t}^{tag}\bigr),\\
  \mathcal{L}_{edge}
    &= -\sum_{i,j=1}^{|\mathcal{V}|}
         \log p\bigl(y_{i,j}^{edge} = 1\bigr),\\
    \mathcal{L}_{rel}
      &= - \sum_{i=1}^{|\mathcal{V}|} \sum_{j=1}^{|\mathcal{V}|}
           \mathbbm{1}\bigl(y_{i,j}^{edge} = 1\bigr)
           \sum_{\ell=1}^{|R|}
             y_{i,j,\ell}^{rel}
             \,\log p\bigl(y_{i,j,\ell}^{rel}\bigr),\\
  \mathcal{L}
    &= \lambda_1\,\mathcal{L}_{tag}
      + \lambda_2\bigl(\mathcal{L}_{edge} + \mathcal{L}_{rel}\bigr).
\end{aligned}
\]

Losses are calculated based on the gold tags, edges, and relations. We set $\lambda_1=0.1$ and \mbox{$\lambda_2=1$} as hyperparameters because the tagging task is much simpler than predicting the edges, since the same top performance is always achieved regardless of any other selected architecture hyperparameters. Following the usual approach for SynDP~\cite{dozat_deep_2017,ji_graph-based_2019,jiang_entity-relation_2024}, when training on enEWT and SciDTB we use an oracle ---the gold tags--- and do not predict the POS tags ourselves.
Since in this case we only focus on training the edge and relation classification tasks, we set $\lambda_1 = 0$.

\subsection{Hyperparameters}
\label{sec:hyperparameters}

We experiment with a range of hyperparameters for the encoder, tagger, and parser, as listed in Table~\ref{tab:hyperparameters}.
We use BERT$_{base}$~\cite{devlin-etal-2019-bert} as our pre-trained encoder, which we keep frozen throughout the whole training run in our main setting.

As regards the tagger, we set $L_\phi=1$ and $h_\phi=100$, as in~\cite{bhatt_end--end_2024}, with weights initialized with a Xavier uniform distribution. In Appendix~\ref{sec:ablations:tagger-bilstm-tagembs}, we ablate the $\phi$ BiLSTM and the use of the tag embeddings $\mathbf{e}^{tag}_i$ to assess their impact on overall performance.
With relation to the parser, for all hyperparameter combinations of $\{N, h_\psi,d_{\text{MLP}}\}$, we run our experiments by initializing its weights with a Xavier uniform distribution ($I_{par} \sim \mathcal{U}$) and no LayerNorm $\texttt{LN}_\psi$. In \Cref{sec:ablations:bilstm-layers,sec:ablations:mlp-out,sec:ablations:bilstm-hidden,sec:ablations:norm-init}, we conduct ablations over the parser hyperparameters indicated in Table~\ref{tab:hyperparameters}.

\begin{table}[t]
    \caption{Main setting hyperparameter ranges. $\psi$ = Parser BiLSTM. $f$ = biaffine layer.}
    \label{tab:hyperparameters}
    \centering
    \begin{tabular}{clc}
    \toprule
     \bf Component & \bf Hyperparameter & \bf Values \\
     \midrule
     Encoder & Freeze BERT & $\nabla_{\text{BERT}} \in \{\checkmark, \times\}$ \\
     \midrule
     \multirow{2}{*}{Tagger} & Tagger BiLSTM $\phi$ & $\phi \in \{\checkmark, \times\}$ \\
     & Concat. tag embeds. & $\mathbf{e}^{tag}_i \in \{\checkmark, \times\}$ \\
     \midrule
     \multirow{6}{*}{Parser} & $\psi$ num. layers & $N \in \{0, 1, 2, 3\}$ \\
     & $\psi$ hidden dim. & $h_{\psi} \in \{100, 200, 300, 400\}$ \\
     & MLP$^{(edge)}$ output dim. & $d_{\text{MLP}} \in \{100, 300, 500\}$ \\
     & $\psi$ LayerNorm & $\texttt{LN}_\psi \in \{\checkmark, \times\}$ \\
     & MLP$^{(edge)}$ and $f^{(edge)}$ init. & $I_{\text{par}} \sim \{\mathcal{U}, \mathcal{N}\}$ \\
     & Score scaling & $a \in \{1, \frac{1}{\sqrt{d}}\}$ \\
     \bottomrule
    \end{tabular}
\end{table}


We train our models for $2k$ steps and evaluate on the development partition of each dataset every 100 steps.
For enEWT and SciDTB, we train for $5k$ steps with $1k$-step validation intervals to make our results more comparable with the state of the art~\cite{ji_graph-based_2019}.
We apply early stopping at 30\% of the total steps without improvement and choose the best model based on the top performance on the development split. In Appendix~\ref{app:ft-sample-efficiency}, we extend the training to $10k$ steps and fully fine-tune a variety of small and large pre-trained language models to show time-wise test performance trends more clearly.
We set the learning rate at $\eta = 1\times 10^{-3}$ when the encoder is kept frozen.
In all settings, including ablations, we use AdamW~\cite{loshchilov2019adamw} as the optimizer and a batch size of 8.
 
We use~\cite{bhatt_end--end_2024}'s original architecture as our baseline. It uses a frozen BERT$_{base}$ model as encoder and trains all of the components showed in Figure~\ref{fig:model_diagram}. Following the best results obtained by~\cite{dozat_deep_2017}, they use three BiLSTM layers in the parser with a hidden size of 400, while the four MLPs following the stacked BiLSTM have an output size of 500 for the edge representations and 100 for the relations.

\subsection{Evaluation}
\label{sec:exp-setting:evaluation}

Following~\cite{bhatt_end--end_2024,dozat_simpler_2018,jiang_entity-relation_2024}, we measure tagging and parsing performance on ADE, CoNLL04, SciERC, and ERFGC in terms of micro-averaged F$_1$-measure. In addition, for enEWT and SciDTB we use unlabeled (UAS) and labeled (LAS) attachment score~\cite{nivre2017universal}.
To corroborate the validity of our results, we train and evaluate each setting, including ablations, with five random seeds. We report mean and standard deviation for the F$_1$, UAS, and LAS metrics, averaged over the five runs. To test the significance of our results, we use the one-tailed Wilcoxon signed-rank test~\cite{woolson2005wilcoxon}.

\section{Results and discussion}
\label{sec:results}


\begin{table}[t]
\centering
\caption{Micro-F$_1$ (SemDP) and LAS (SynDP) for the labeled edge prediction task. Best in bold.
}
\label{tab:best-results}
\begin{tabular}{c@{\hspace{2mm}}c@{\hspace{2mm}}c@{\hspace{2mm}}c@{\hspace{2mm}}c@{\hspace{2mm}}c@{\hspace{2mm}}c@{\hspace{2mm}}c@{\hspace{2mm}}c@{\hspace{2mm}}c@{\hspace{2mm}}}
\toprule
\bf Model & \bf $a$ & \bf $N$ & \bf ADE & \bf CoNLL04 & \bf SciERC & \bf ERFGC & \bf enEWT & \bf SciDTB \\
\midrule
\cite{bhatt_end--end_2024} & $1$ & \multirow{1}{*}{3} & 0.653\tiny{$\pm0.018$} & 0.566\tiny{$\pm0.019$} & 0.257\tiny{$\pm0.024$} & 0.701\tiny{$\pm0.009$} & 0.804 \tiny{$\pm0.006$} & 0.915 \tiny{$\pm0.002$} \\
\midrule
\multirow{8}{*}{Ours} & \multirow{4}{*}{$1$}
  & 0 & 0.541 \tiny{$\pm0.021$} & 0.399 \tiny{$\pm0.024$} & 0.147 \tiny{$\pm0.049$} & 0.548 \tiny{$\pm0.010$} & 0.559 \tiny{$\pm0.005$} & 0.729 \tiny{$\pm0.004$} \\
& & 1 & 0.657 \tiny{$\pm0.011$} & 0.556 \tiny{$\pm0.021$} & 0.282 \tiny{$\pm0.009$} & 0.676 \tiny{$\pm0.010$} & 0.771 \tiny{$\pm0.006$} & 0.892 \tiny{$\pm0.002$} \\
& & 2 & 0.667 \tiny{$\pm0.011$} & 0.573 \tiny{$\pm0.025$} & 0.273 \tiny{$\pm0.010$} & 0.694 \tiny{$\pm0.010$} & 0.796 \tiny{$\pm0.006$} & 0.910 \tiny{$\pm0.002$} \\
& & 3 & 0.662 \tiny{$\pm0.027$} & 0.562 \tiny{$\pm0.021$} & 0.299 \tiny{$\pm0.023$} & 0.705 \tiny{$\pm0.011$} & 0.804 \tiny{$\pm0.006$} & 0.915 \tiny{$\pm0.002$} \\
\cmidrule{2-9}
& \multirow{4}{*}{$\frac{1}{\sqrt{d}}$}
  & 0 & 0.567 \tiny{$\pm0.014$} & 0.438 \tiny{$\pm0.033$} & 0.181 \tiny{$\pm0.027$} & 0.612 \tiny{$\pm0.008$} & 0.646 \tiny{$\pm0.002$} & 0.796 \tiny{$\pm0.002$} \\
& & 1 & 0.668 \tiny{$\pm0.017$} & 0.597 \tiny{$\pm0.015$} & 0.299 \tiny{$\pm0.019$} & 0.692 \tiny{$\pm0.009$} & 0.789 \tiny{$\pm0.003$} & 0.904 \tiny{$\pm0.002$} \\
& & 2 & 0.676 \tiny{$\pm0.019$} & 0.596 \tiny{$\pm0.014$} & 0.312 \tiny{$\pm0.011$} & 0.699 \tiny{$\pm0.009$} & 0.805 \tiny{$\pm0.003$} & 0.916 \tiny{$\pm0.002$} \\
& & 3 & \bf 0.686 \tiny{$\pm0.025$} & \bf 0.602 \tiny{$\pm0.017$} & \bf 0.320 \tiny{$\pm0.013$} & \bf 0.708 \tiny{$\pm0.008$} & \bf 0.807 \tiny{$\pm0.005$} & \bf 0.919 \tiny{$\pm0.001$} \\
\bottomrule
\end{tabular}
\end{table}

Table~\ref{tab:best-results} shows the results of our experiments for the labeled edge prediction task, with the first row using the same hyperparameters as~\cite{bhatt_end--end_2024} $(h_\psi = 400, d_{\text{MLP}} = 500)$. We also use these hyperparameters for enEWT and SciDTB, since our ablation studies only concerns SemDP due to SynDP being less challenging. The lower half uses the best combinations for ADE (200, 100), CoNLL04 (400, 300), SciERC (300, 300), and ERFGC (400, 300), chosen based on mean performance across these four datasets.
For brevity, the performance on the tagging and unlabeled edge prediction tasks are reported in Appendix~\ref{app:full-results}. Details on the used computational resources can be found in Appendix~\ref{app:compute}.


Overall, \textbf{\textit{normalizing biaffine scores provides an evident performance boost at all BiLSTM depths.}} In particular, for ERFGC we beat the state-of-the-art performance achieved by~\cite{bhatt_end--end_2024}. 
Most of the performance gain is obtained by adding the first layer, with additional ones yielding diminishing returns.
In other words, the performance boost provided by score normalization is highest in the absence of the implicit normalization provided by extra parameters ($N > 0$).
In general, the top performance obtained by using three BiLSTM layers and no score normalization can be matched or surpassed \textbf{\textit{with a single BiLSTM layer, when using score normalization.}} Taking into account the lower values for $h_\psi$ and $d_{\text{MLP}}$, this represents a reduction in trained parameters of up to 85\%.

\begin{figure}[t]
    \centering
    \includegraphics[width=\linewidth]{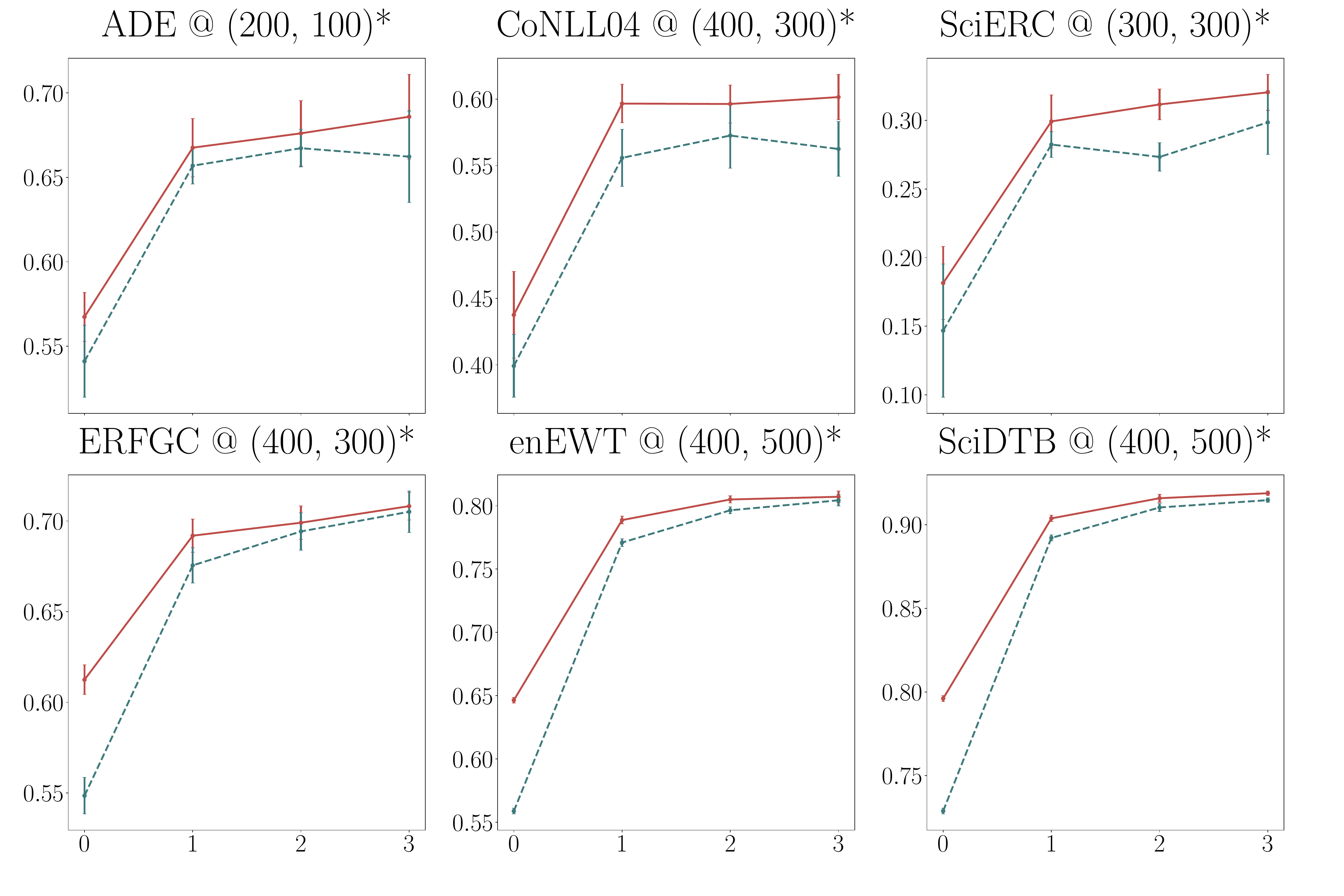}
    \caption{Micro-F$_1$ (SemDP) and LAS (SynDP) vs $N \in \{0, 1, 2, 3\}$ at 2$k$ training steps.
    Red = norm; blue = raw. $^*$Performance increase with normalization is statistically significant ($p < 0.01$).}
    \label{fig:line-graphs}
\end{figure}

\begin{figure}[t]
    \centering
    \includegraphics[width=\linewidth]{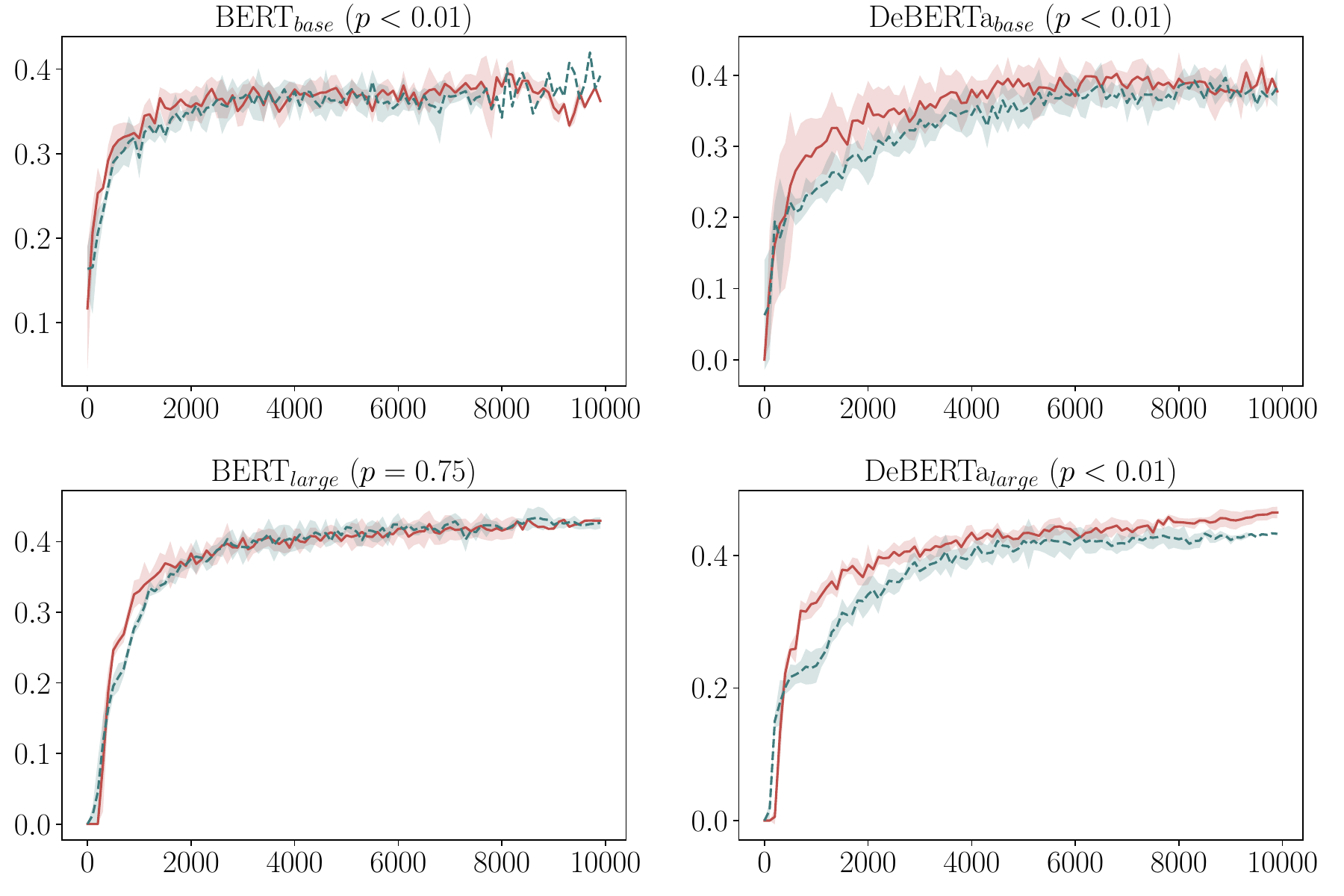}
    \caption{Test performance on SciERC (micro-averaged F$_1$-measure vs number of training steps). The $p$-values indicate greater performance with normalization (one-tailed Wilcoxon signed-rank test).}
    \label{fig:scierc-line-graphs}
\end{figure}

As laid out in Section~\ref{sec:math}, score variance tends to decay with deeper BiLSTM stacks. This in turn produces a converging trend as $N$ increases.
When looking at Figure~\ref{fig:line-graphs}, this is especially evident for ERFGC, enEWT, and SciDTB, for which the beneficial effect of score normalization shrinks smoothly with higher values of $N$. 
Although the same cannot be said for ADE, CoNLL04, and SciERC, the performance boost is still statistically significant across all layer depths ($p < 0.01$).

As regards SciERC, normalizing scores without any BiLSTM layers ($N = 0$) produces a 23\% increase in performance with a strong reduction in standard deviation. 
SciERC arguably has the hardest dependency graphs to parse, due to the little overlap between training and testing entities, which makes it difficult to leverage tag embeddings. In addition, it is characterized by complex semantic dependencies.
Therefore, we reckon normalizing biaffine scores to be especially important when dealing with hard tasks. In particular, normalizing scores helps mitigate the higher variance of the model's predictions for this challenging dataset. Conversely, the lack of score normalization exacerbates already uncertain predictions. For SciERC, this makes the trend observed in Figure~\ref{fig:line-graphs} more unstable. Indeed, the performance first drops at $N = 2$ and then increases once again at $N = 3$, which does not happen for the other datasets.


Compared to existing approaches which do not scale the biaffine scores, fewer parameters can thus be trained to obtain similar performance.
Therefore, our results indicate that parsers using raw biaffine scores are likely overparameterized, compared to the performance they could achieve by using score normalization.

In addition, score normalization increases sample efficiency by accelerating convergence. Figure~\ref{fig:scierc-line-graphs} displays the performance on SciERC's test set for four models during full fine-tuning. As the figure shows, the speedup in convergence is statistically significant when normalizing scores.
This shows how score normalization can be beneficial even when fully fine-tuning models with $\mytilde10^8$ parameters, given a hard task which forces the model to make uncertain predictions.


\section{Conclusions}
\label{sec:conclusions}


In this work, we explored the effect of scaling the scores produced by biaffine transformations when predicting the edges of a dependency graph. We have demonstrated, both theoretically and empirically, that the score variance produced by a lack of score scaling hurts model performance when predicting edges and relations. In addition, our theoretical work and experiments highlight a strong relationship between the number of trained layers and their intrinsic normalization effect.

Departing from a state-of-the-art architecture for semantic dependency parsing, we were able to improve its performance on both semantic and syntactic dependency parsing on six datasets. On ERFGC, a dataset of directed acyclic semantic dependency graphs compiled from culinary recipes, our approach beats the state-of-the-art performance achieved by Bhatt et al.~\cite{bhatt_end--end_2024}. Moreover, our results showed that a single BiLSTM layer can be sufficient to match or surpass the results of state-of-the-art architectures with a decrease in trained parameters of up to 85\%. In the case of SciERC, a challenging dataset for semantic dependency parsing, we found that 
the performance boost was particularly great when 
training the parser with three BiLSTM layers. Moreover, for this challenging dataset, we find that scaling the predictions of the biaffine scorer can accelerate convergence speed even when fully fine-tuning models in the $100$ to $400M$ parameter range. 
In addition, for three of the datasets we also observed that the performance obtained with normalized and raw scores converged smoothly as the number of trained layers increased.
This supported our claim that stacking BiLSTM layers mainly serves the purpose of producing an implicit regularization, and that this effect can be obtained by normalizing the scores, without any extra parameters.
In additional experiments, we corroborated our findings using data in languages other than English~\cite{nivre-etal-2020-universal}, as well as non-linguistic data, for example using a dataset of molecule graphs~\cite{ramakrishnan2014qm9}. Using GAT layers, we further confirmed our hypothesis that GNN-based iterative adjacency refinement~\cite{ji_graph-based_2019} also benefits from normalization. 

In future work, we plan to extend our experiments to large-scale graph inference tasks which have been limited by the lack of parameter-efficient methods, such as long-form discourse parsing tasks. With regard to score scaling, we also wish to verify whether the positive effects of this normalization can carry over to other tasks, beyond latent graph inference. Finally, further explorations in model efficiency are warranted, given our findings, e.g., via pruning of model parameters.

\section*{Acknowledgements}

We acknowledge the support of the Killam Foundation, the Natural Sciences and Engineering Research Council of Canada (NSERC), RGPIN-2022-03943, Canada Foundation of Innovation (CFI) and Research Nova Scotia. Advanced computing resources are provided by ACENET, the regional partner in Atlantic Canada, the Digital Research Alliance of Canada, and the Vector Institute. 

Paolo Gajo carried out this work while visiting Dalhousie University and working with the Hypermatrix team.
His work is funded by the EU (NextGenerationEU) with funds made available by the National Recovery and Resilience Plan (NRRP), Mission 4, Component 1, Investment 4.1 (M.D. 118/2023).



\bibliographystyle{plain}
\bibliography{refs}
\clearpage
\appendix

\begin{table}[t]
\centering
\caption{Samples per train/dev/test partition for each non-English dataset.}
\label{tab:multilingual-datasets}
\begin{tabular}{l@{\hspace{2mm}}r@{\hspace{2mm}}r@{\hspace{2mm}}r@{\hspace{2mm}}l@{\hspace{2mm}}}
\toprule
\bf Dataset & \bf Train & \bf Dev & \bf Test & \bf URL \\
\midrule
\textbf{Arabic} & 6,075 & 909 & 680 & \fontsize{8}{0} \url{https://github.com/UniversalDependencies/UD_Arabic-PADT} \\
 \midrule
\textbf{Chinese} & 3,996 & 500 & 500 & \fontsize{8}{0} \url{https://github.com/UniversalDependencies/UD_Chinese-GSD} \\
 \midrule
\textbf{Italian} & 12,161 & 538 & 467 & \fontsize{8}{0} \url{https://github.com/UniversalDependencies/UD_Italian-ISDT} \\
 \midrule
\textbf{Japanese} & 6,824 & 493 & 518 & \fontsize{8}{0} \url{https://github.com/UniversalDependencies/UD_Japanese-GSD} \\
 \midrule
\textbf{Spanish} & 13,821 & 1,607 & 1,666 & \fontsize{8}{0} \url{https://github.com/UniversalDependencies/UD_Spanish-AnCora} \\
 \midrule
\textbf{Wolof} & 1,149 & 436 & 453 & \fontsize{8}{0} \url{https://github.com/UniversalDependencies/UD_Wolof-WTB} \\
\bottomrule
\end{tabular}
\end{table}

\begin{table}[t]
\centering
\caption{SynDP results for different languages, with and without score normalization.}
\label{tab:multilingual-results}
\begin{tabular}{cccccccc}
\toprule
$a$ & $N$ & \textbf{AR} & \textbf{CH} & \textbf{IT} & \textbf{JP} & \textbf{ES} & \textbf{WO} \\
\midrule
\multirow{4}{*}{$1$} & 0 & 0.538 \tiny{$\pm$ 0.005} & 0.395 \tiny{$\pm$ 0.007} & 0.563 \tiny{$\pm$ 0.006} & 0.493 \tiny{$\pm$ 0.010} & 0.554 \tiny{$\pm$ 0.004} & 0.252 \tiny{$\pm$ 0.007} \\
& 1 & 0.723 \tiny{$\pm$ 0.008} & 0.653 \tiny{$\pm$ 0.007} & 0.792 \tiny{$\pm$ 0.003} & 0.812 \tiny{$\pm$ 0.003} & 0.775 \tiny{$\pm$ 0.003} & 0.525 \tiny{$\pm$ 0.006} \\
& 2 & 0.745 \tiny{$\pm$ 0.004} & 0.710 \tiny{$\pm$ 0.005} & 0.826 \tiny{$\pm$ 0.003} & 0.844 \tiny{$\pm$ 0.005} & 0.807 \tiny{$\pm$ 0.002} & 0.587 \tiny{$\pm$ 0.007} \\
& 3 & 0.748 \tiny{$\pm$ 0.004} & 0.717 \tiny{$\pm$ 0.007} & 0.832 \tiny{$\pm$ 0.002} & 0.849 \tiny{$\pm$ 0.003} & 0.808 \tiny{$\pm$ 0.002} & 0.614 \tiny{$\pm$ 0.005} \\
\midrule
\multirow{4}{*}{$\frac{1}{\sqrt{d}}$} & 0 & 0.609 \tiny{$\pm$ 0.003} & 0.479 \tiny{$\pm$ 0.007} & 0.633 \tiny{$\pm$ 0.002} & 0.585 \tiny{$\pm$ 0.005} & 0.620 \tiny{$\pm$ 0.002} & 0.305 \tiny{$\pm$ 0.005} \\
& 1 & 0.737 \tiny{$\pm$ 0.007} & 0.693 \tiny{$\pm$ 0.003} & 0.820 \tiny{$\pm$ 0.004} & 0.838 \tiny{$\pm$ 0.004} & 0.801 \tiny{$\pm$ 0.003} & 0.556 \tiny{$\pm$ 0.006} \\
& 2 & 0.758 \tiny{$\pm$ 0.003} & 0.736 \tiny{$\pm$ 0.005} & 0.842 \tiny{$\pm$ 0.004} & 0.859 \tiny{$\pm$ 0.004} & 0.822 \tiny{$\pm$ 0.001} & 0.613 \tiny{$\pm$ 0.008} \\
& 3 & \bf 0.759 \tiny{$\pm$ 0.005} & \bf 0.742 \tiny{$\pm$ 0.006} & \bf 0.845 \tiny{$\pm$ 0.002} & \bf 0.859 \tiny{$\pm$ 0.003} & \bf 0.823 \tiny{$\pm$ 0.002} & \bf 0.633 \tiny{$\pm$ 0.005} \\
\bottomrule
\end{tabular}
\end{table}

\section{Non-English data results}
\label{sec:multilingual-results}

In this appendix, we carry out experiments on the SynDP task on six non-English datasets, all downloaded from Universal Dependencies.\footnote{\url{https://universaldependencies.org}} The datasets are listed in Table~\ref{tab:multilingual-datasets} and comprise the following languages: Arabic, Chinese, Italian, Japanese, Spanish, and Wolof. The results of the experiments are reported in Table~\ref{tab:multilingual-results}. Since the data is in different languages, as encoder we use mBERT, the multilingual version of BERT.\footnote{\url{https://huggingface.co/google-bert/bert-base-multilingual-cased}} All model hyperparameters are kept the same as in the main setting.

Similarly to the English results observed for enEWT and SciDTB, the addition of BiLSTM layers increases the performance. Furthermore, additional layers have a diminished boosting effect on the performance. This shows that the benefits of normalization are independent of the language of which the dependencies are being parsed.

\begin{table}[t]
\centering
\caption{Results for the non-linguistic datasets at varying depths of GAT layers.}
\label{tab:non-linguistic-results}
\begin{tabular}{ccccc}
\toprule
$a$ & $L_\gamma$ & \bf PCQM-Contact & \bf CIFAR10 Superpixel & \bf QM9 \\
\midrule
\multirow{3}{*}{$1$} & 1 & 0.241 \tiny{$\pm 0.044$} & 0.407 \tiny{$\pm 0.110$} & 0.861 \tiny{$\pm 0.008$} \\
 & 2 & 0.217 \tiny{$\pm 0.019$} & 0.528 \tiny{$\pm 0.134$} & 0.876 \tiny{$\pm 0.006$} \\
 & 3 & 0.245 \tiny{$\pm 0.034$} & \bf 0.751 \tiny{$\pm 0.014$} & 0.877 \tiny{$\pm 0.008$} \\
 \midrule
\multirow{3}{*}{$\frac{1}{\sqrt{d}}$} & 1 & \bf 0.288 \tiny{$\pm 0.090$} & 0.531 \tiny{$\pm 0.038$} & 0.886 \tiny{$\pm 0.001$} \\
& 2 & 0.209 \tiny{$\pm 0.028$} & 0.585 \tiny{$\pm 0.121$} & 0.880 \tiny{$\pm 0.004$} \\
& 3 & 0.237 \tiny{$\pm 0.019$} & 0.703 \tiny{$\pm 0.046$} & \bf 0.881 \tiny{$\pm 0.013$} \\
\bottomrule
\end{tabular}
\end{table}

\section{Non-linguistic results}
\label{sec:non-linguistic-results}

In this appendix we lay out the unlabeled latent graph inference experiments we carried out on three non-linguistic datasets: PCQM-Contact~\cite{dwivedi2022lrgb}, CIFAR10 Superpixel~\cite{dwivedi2023benchmarking}, and QM9~\cite{ramakrishnan2014qm9}. PCQM-Contact and QM9 are molecule datasets, while CIFAR10 Superpixel is a dataset of superpixel clusters constructed from CIFAR10. Nodes in QM9 and PCQM-Contact respectively contain 9 and 11 features describing each atom in a molecule. In CIFAR10 Superpixel, nodes contain the 3D RGB features of each pixel cluster. In QM9 and CIFAR10 Superpixel, we concatenate the 3D and 2D spatial coordinates of the nodes to their features. Therefore, the resulting node feature dimensionalities of the three datasets are: $d_{\text{PCQM}} = 9$,  $d_{\text{CIFAR10}} = 5$, and $d_{\text{QM9}} = 14$.

We feed the graph node features into $L_\gamma \in \{1, 2, 3\}$ GAT layers and provide them with a fully-connected adjacency matrix without any edge attributes. 
Although the adjacency is fully connected, meaning each node can attend to all others in a single step, stacking $L_\gamma$ layers refines the globally aggregated representations over 1–3 iterations.
The processed node features are passed into the biaffine layer to produce edge predictions. In this case, we pass the scores into a sigmoid function because we are not trying to predict an arborescence with the maximum spanning tree algorithm like in dependency parsing. We use $N = 0$ BiLSTM layers to isolate the effect of the GAT layers with respect to any other network components. Scores are averaged over three different seeds.

The unlabeled F$_1$ scores are reported in Table~\ref{tab:non-linguistic-results}. In all cases, the results show that the gap in performance is large at $L_\phi = 1$ and decreases (or flips) with additional layers. This corroborates the findings from the other settings and is an especially important result considering the phenomenon holds even for such a small network, consisting of just 100--160$k$ parameters.

\begin{figure}[t]
    \centering
    \includegraphics[width=\linewidth]{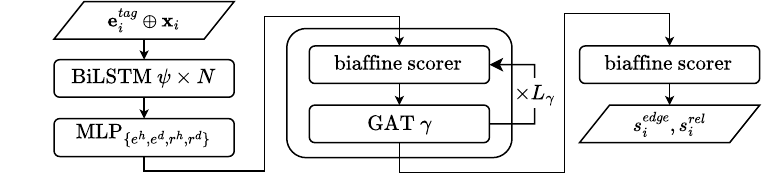}
    \caption{Extended parser architecture, featuring $L_\gamma \in \{0, 1, 2, 3\}$ pairs of biaffine and GAT layers.}
    \label{fig:gat-parser}
\end{figure}

\begin{table}[t]
\centering
\caption{
Micro-F$_1$ (SemDP) and LAS (SynDP) for the labeled edge prediction task at varying depths of $L_\gamma \in \{0, 1, 2, 3\}$ GAT layers. Best in bold.
}
\label{tab:gat-results}
\begin{tabular}{cccccccc}
\toprule
$a$ & $L_\gamma$ & \bf ADE & \bf CoNLL04 & \bf SciERC & \bf ERFGC & \bf enEWT & \bf SciDTB  \\
\midrule
\multirow{4}{*}{$1$} & 0 & 0.517 \tiny{$\pm 0.038$} & 0.526 \tiny{$\pm 0.046$} & 0.123 \tiny{$\pm 0.050$} & 0.536 \tiny{$\pm 0.013$} & 0.610 \tiny{$\pm 0.009$} & 0.727 \tiny{$\pm 0.006$}  \\
 & 1 & 0.509 \tiny{$\pm 0.022$} & 0.493 \tiny{$\pm 0.037$} & 0.039 \tiny{$\pm 0.020$} & 0.599 \tiny{$\pm 0.008$} & 0.583 \tiny{$\pm 0.011$} & 0.731 \tiny{$\pm 0.009$}  \\
 & 2 & 0.509 \tiny{$\pm 0.025$} & 0.485 \tiny{$\pm 0.033$} & 0.029 \tiny{$\pm 0.041$} & 0.611 \tiny{$\pm 0.015$} & 0.540 \tiny{$\pm 0.012$} & 0.710 \tiny{$\pm 0.013$}  \\
 & 3 & 0.487 \tiny{$\pm 0.040$} & 0.481 \tiny{$\pm 0.087$} & 0.000 \tiny{$\pm 0.000$} & 0.585 \tiny{$\pm 0.009$} & 0.511 \tiny{$\pm 0.015$} & 0.700 \tiny{$\pm 0.010$}  \\
\midrule
\multirow{4}{*}{$\frac{1}{\sqrt{d}}$} & 0 & 0.556 \tiny{$\pm 0.037$} & \bf 0.574 \tiny{$\pm 0.028$} & \bf 0.156 \tiny{$\pm 0.031$} & 0.607 \tiny{$\pm 0.009$} & 0.671 \tiny{$\pm 0.007$} & 0.778 \tiny{$\pm 0.006$}  \\
& 1 & \bf 0.587 \tiny{$\pm 0.015$} & 0.550 \tiny{$\pm 0.036$} & 0.148 \tiny{$\pm 0.028$} & \bf 0.639 \tiny{$\pm 0.007$} & \bf 0.696 \tiny{$\pm 0.005$} & \bf 0.809 \tiny{$\pm 0.008$}  \\
& 2 & 0.534 \tiny{$\pm 0.023$} & 0.563 \tiny{$\pm 0.029$} & 0.128 \tiny{$\pm 0.029$} & 0.637 \tiny{$\pm 0.007$} & 0.657 \tiny{$\pm 0.007$} & 0.795 \tiny{$\pm 0.008$}  \\
& 3 & 0.543 \tiny{$\pm 0.024$} & 0.514 \tiny{$\pm 0.026$} & 0.071 \tiny{$\pm 0.021$} & 0.616 \tiny{$\pm 0.006$} & 0.596 \tiny{$\pm 0.010$} & 0.770 \tiny{$\pm 0.007$}  \\
\bottomrule
\end{tabular}
\end{table}

\section{GAT results}
\label{sec:gat-results}

In this appendix, we extend the main-setting parser, following~\cite{ji_graph-based_2019}, and carry out $k$-hop parsing experiments by passing the node representations through $L_\gamma \in \{0, 1, 2, 3\}$ pairs of biaffine and GAT layers. These allow the model to encode higher-order dependencies before the final biaffine layer. Figure~\ref{fig:gat-parser} depicts the extended architecture. We keep $N = 0$ to isolate the effect of the GAT layers.

As Table~\ref{tab:gat-results} shows, the results of this architecture are consistent with the ones obtained using the main-setting 1-hop parser, with the performance increasing across the board when using normalization. As regards the number of GAT layers, performance increases when using $L_\gamma \in \{1, 2\}$ layers, compared to $L_\gamma = 0$, but drops again for all datasets at $L_\gamma = 3$. In general, the best performance is obtained with normalization and $L_\gamma \in \{0, 1\}$ layers.
It is particularly interesting to notice that for enEWT the top performance without normalization is obtained at $L_\gamma = 0$, while $L_\gamma = 1$ provides better performance when normalizing biaffine scores.
Overall, the usefulness of multi-hop dependency parsing for SemDP is scarce if compared with the SynDP results, where higher-order dependencies provide more useful information for the predictions.

\section{Ablations}
\label{sec:ablations}

This appendix provides a collection of ablation experiments for various components and settings of the model used in the main experiments.

\begin{table} 
\centering
\caption{Tagger ablations with best hyperparameters $(h_\psi, d_{\text{MLP}})$. Best in bold, second-best underlined.}
\label{tab:tagger-ablation}
\begin{tabular}{ccccccccc}
\toprule
\bf $a$ & \bf $N$ & \bf $\phi$ & \bf $\textbf{e}_i^{tag}$ & \bf ADE & \bf CoNLL04 & \bf SciERC & \bf ERFGC & \bf Mean\\
\midrule
\multicolumn{4}{l}{$(h_\psi, d_{\text{MLP}}) =$} & (200, 100) & (400, 300) & (300, 300) & (400, 300) \\
\midrule
\multirow{4}{*}{$1$}
& 0 & $\blacksquare$ & $\blacksquare$ & 0.541 \tiny{$\pm0.021$} & 0.399 \tiny{$\pm0.024$} & 0.147 \tiny{$\pm0.049$} & 0.548 \tiny{$\pm0.010$}  & \multirow{4}{*}{0.515}\\
& 1 & $\blacksquare$ & $\blacksquare$ & 0.657 \tiny{$\pm0.011$} & 0.556 \tiny{$\pm0.021$} & 0.282 \tiny{$\pm0.009$} & 0.676 \tiny{$\pm0.010$}  & \\
& 2 & $\blacksquare$ & $\blacksquare$ & 0.667 \tiny{$\pm0.011$} & 0.573 \tiny{$\pm0.025$} & 0.273 \tiny{$\pm0.010$} & 0.694 \tiny{$\pm0.010$}  & \\
& 3 & $\blacksquare$ & $\blacksquare$ & 0.662 \tiny{$\pm0.027$} & 0.562 \tiny{$\pm0.021$} & 0.299 \tiny{$\pm0.023$} & 0.705 \tiny{$\pm0.011$}  & \\
\midrule
\multirow{4}{*}{$\frac{1}{\sqrt{d}}$}
& 0 & $\blacksquare$ & $\blacksquare$ & 0.567 \tiny{$\pm0.014$} & 0.438 \tiny{$\pm0.033$} & 0.181 \tiny{$\pm0.027$} & 0.612 \tiny{$\pm0.008$} & \multirow{4}{*}{\bf 0.541}\\
& 1 & $\blacksquare$ & $\blacksquare$ & 0.668 \tiny{$\pm0.017$} & \underline{0.597} \tiny{$\pm0.015$} & 0.299 \tiny{$\pm0.019$} & 0.692 \tiny{$\pm0.009$} & \\
& 2 & $\blacksquare$ & $\blacksquare$ & 0.676 \tiny{$\pm0.019$} & 0.596 \tiny{$\pm0.014$} & 0.312 \tiny{$\pm0.011$} & 0.699 \tiny{$\pm0.009$} & \\
& 3 & $\blacksquare$ & $\blacksquare$ & \bf 0.686 \tiny{$\pm0.025$} & \bf 0.602 \tiny{$\pm0.017$} & 0.320 \tiny{$\pm0.013$} & \bf 0.708 \tiny{$\pm0.008$} & \\
\midrule
\multirow{4}{*}{$1$}
& 0 & $\blacksquare$ & $\square$ & 0.543 \tiny{$\pm0.013$} & 0.402 \tiny{$\pm0.023$} & 0.162 \tiny{$\pm0.019$} & 0.554 \tiny{$\pm0.008$} & \multirow{4}{*}{0.517}\\
& 1 & $\blacksquare$ & $\square$ & 0.674 \tiny{$\pm0.011$} & 0.527 \tiny{$\pm0.026$} & 0.275 \tiny{$\pm0.013$} & 0.677 \tiny{$\pm0.005$} & \\
& 2 & $\blacksquare$ & $\square$ & 0.672 \tiny{$\pm0.019$} & 0.575 \tiny{$\pm0.009$} & 0.293 \tiny{$\pm0.012$} & 0.689 \tiny{$\pm0.011$} & \\
& 3 & $\blacksquare$ & $\square$ & 0.657 \tiny{$\pm0.027$} & 0.583 \tiny{$\pm0.011$} & 0.301 \tiny{$\pm0.012$} & 0.697 \tiny{$\pm0.007$} & \\
\midrule
\multirow{4}{*}{$\frac{1}{\sqrt{d}}$}
& 0 & $\blacksquare$ & $\square$ & 0.563 \tiny{$\pm0.013$} & 0.443 \tiny{$\pm0.019$} & 0.188 \tiny{$\pm0.006$} & 0.612 \tiny{$\pm0.003$} & \multirow{4}{*}{0.534}\\
& 1 & $\blacksquare$ & $\square$ & 0.653 \tiny{$\pm0.023$} & 0.565 \tiny{$\pm0.027$} & 0.299 \tiny{$\pm0.020$} & 0.687 \tiny{$\pm0.006$} & \\
& 2 & $\blacksquare$ & $\square$ & 0.672 \tiny{$\pm0.017$} & 0.592 \tiny{$\pm0.020$} & 0.307 \tiny{$\pm0.008$} & 0.702 \tiny{$\pm0.005$} & \\
& 3 & $\blacksquare$ & $\square$ & 0.661 \tiny{$\pm0.022$} & 0.593 \tiny{$\pm0.017$} & 0.305 \tiny{$\pm0.014$} & \bf 0.708 \tiny{$\pm0.007$} & \\
\midrule
\multirow{4}{*}{$1$}
& 0 & $\square$ & $\blacksquare$ & 0.550 \tiny{$\pm0.026$} & 0.387 \tiny{$\pm0.030$} & 0.157 \tiny{$\pm0.012$} & 0.553 \tiny{$\pm0.006$} & \multirow{4}{*}{0.514}\\
& 1 & $\square$ & $\blacksquare$ & 0.667 \tiny{$\pm0.022$} & 0.532 \tiny{$\pm0.036$} & 0.273 \tiny{$\pm0.013$} & 0.673 \tiny{$\pm0.006$} & \\
& 2 & $\square$ & $\blacksquare$ & 0.665 \tiny{$\pm0.021$} & 0.565 \tiny{$\pm0.024$} & 0.278 \tiny{$\pm0.027$} & 0.694 \tiny{$\pm0.013$} & \\
& 3 & $\square$ & $\blacksquare$ & 0.676 \tiny{$\pm0.021$} & 0.558 \tiny{$\pm0.051$} & 0.288 \tiny{$\pm0.012$} & \underline{0.706} \tiny{$\pm0.006$} & \\
\midrule
\multirow{4}{*}{$\frac{1}{\sqrt{d}}$}
& 0 & $\square$ & $\blacksquare$ & 0.578 \tiny{$\pm0.022$} & 0.437 \tiny{$\pm0.030$} & 0.187 \tiny{$\pm0.014$} & 0.611 \tiny{$\pm0.008$} & \multirow{4}{*}{0.534}\\
& 1 & $\square$ & $\blacksquare$ & 0.651 \tiny{$\pm0.032$} & 0.559 \tiny{$\pm0.029$} & 0.301 \tiny{$\pm0.007$} & 0.684 \tiny{$\pm0.003$} & \\
& 2 & $\square$ & $\blacksquare$ & 0.673 \tiny{$\pm0.029$} & 0.582 \tiny{$\pm0.017$} & 0.310 \tiny{$\pm0.014$} & 0.701 \tiny{$\pm0.008$} & \\
& 3 & $\square$ & $\blacksquare$ & 0.659 \tiny{$\pm0.019$} & 0.586 \tiny{$\pm0.026$} & \bf 0.324 \tiny{$\pm0.019$} & \bf 0.708 \tiny{$\pm0.012$} & \\
\midrule
\multirow{4}{*}{$1$}
& 0 & $\square$ & $\square$ & 0.547 \tiny{$\pm0.019$} & 0.402 \tiny{$\pm0.033$} & 0.156 \tiny{$\pm0.029$} & 0.549 \tiny{$\pm0.015$} & \multirow{4}{*}{0.516}\\
& 1 & $\square$ & $\square$ & 0.651 \tiny{$\pm0.017$} & 0.552 \tiny{$\pm0.012$} & 0.288 \tiny{$\pm0.008$} & 0.680 \tiny{$\pm0.007$} & \\
& 2 & $\square$ & $\square$ & 0.664 \tiny{$\pm0.031$} & 0.566 \tiny{$\pm0.012$} & 0.288 \tiny{$\pm0.017$} & 0.693 \tiny{$\pm0.011$} & \\
& 3 & $\square$ & $\square$ & 0.663 \tiny{$\pm0.008$} & 0.561 \tiny{$\pm0.014$} & 0.291 \tiny{$\pm0.012$} & 0.703 \tiny{$\pm0.007$} & \\
\midrule
\multirow{4}{*}{$\frac{1}{\sqrt{d}}$}
& 0 & $\square$ & $\square$ & 0.566 \tiny{$\pm0.014$} & 0.431 \tiny{$\pm0.015$} & 0.182 \tiny{$\pm0.023$} & 0.606 \tiny{$\pm0.012$} & \multirow{4}{*}{0.534}\\
& 1 & $\square$ & $\square$ & 0.655 \tiny{$\pm0.015$} & 0.567 \tiny{$\pm0.010$} & 0.286 \tiny{$\pm0.018$} & 0.678 \tiny{$\pm0.013$} & \\
& 2 & $\square$ & $\square$ & 0.678 \tiny{$\pm0.014$} & 0.591 \tiny{$\pm0.026$} & 0.307 \tiny{$\pm0.007$} & 0.702 \tiny{$\pm0.007$} & \\
& 3 & $\square$ & $\square$ & \underline{0.684} \tiny{$\pm0.022$} & 0.595 \tiny{$\pm0.017$} & \underline{0.321} \tiny{$\pm0.016$} & 0.698 \tiny{$\pm0.015$} & \\
\bottomrule
\end{tabular}
\end{table}

\subsection{Tagger}
\label{sec:ablations:tagger-bilstm-tagembs}

Table~\ref{tab:tagger-ablation} reports the results for the best values of $h_\psi$ and $d_{\text{MLP}}$ (chosen as described in Section~\ref{sec:exp-setting}), ablating over $\phi \in \{\checkmark, \times\}$ and $\textbf{e}_i^{tag} \in \{\checkmark, \times\}$. The first quarter of the table is equivalent to Table~\ref{tab:best-results}.

Using both the tagger BiLSTM and tag embeddings on average produces the best results. This is sensible, since a better tagger produces better tag embeddings, which in turn help inform the edge and relation classification tasks. The model in this case obtains the best performance on ADE, CoNLL04, and ERFGC. Its top performance for SciERC (0.320) is also close to the overall peak performance (0.324), obtained without using the tagger BiLSTM. On average, the mean performance is considerably higher when combining the positive contributions of the BiLSTM and the tag embeddings.

\subsection{Additional BiLSTM layers}
\label{sec:ablations:bilstm-layers}

\begin{figure}
    \centering
    \includegraphics[width=\linewidth]{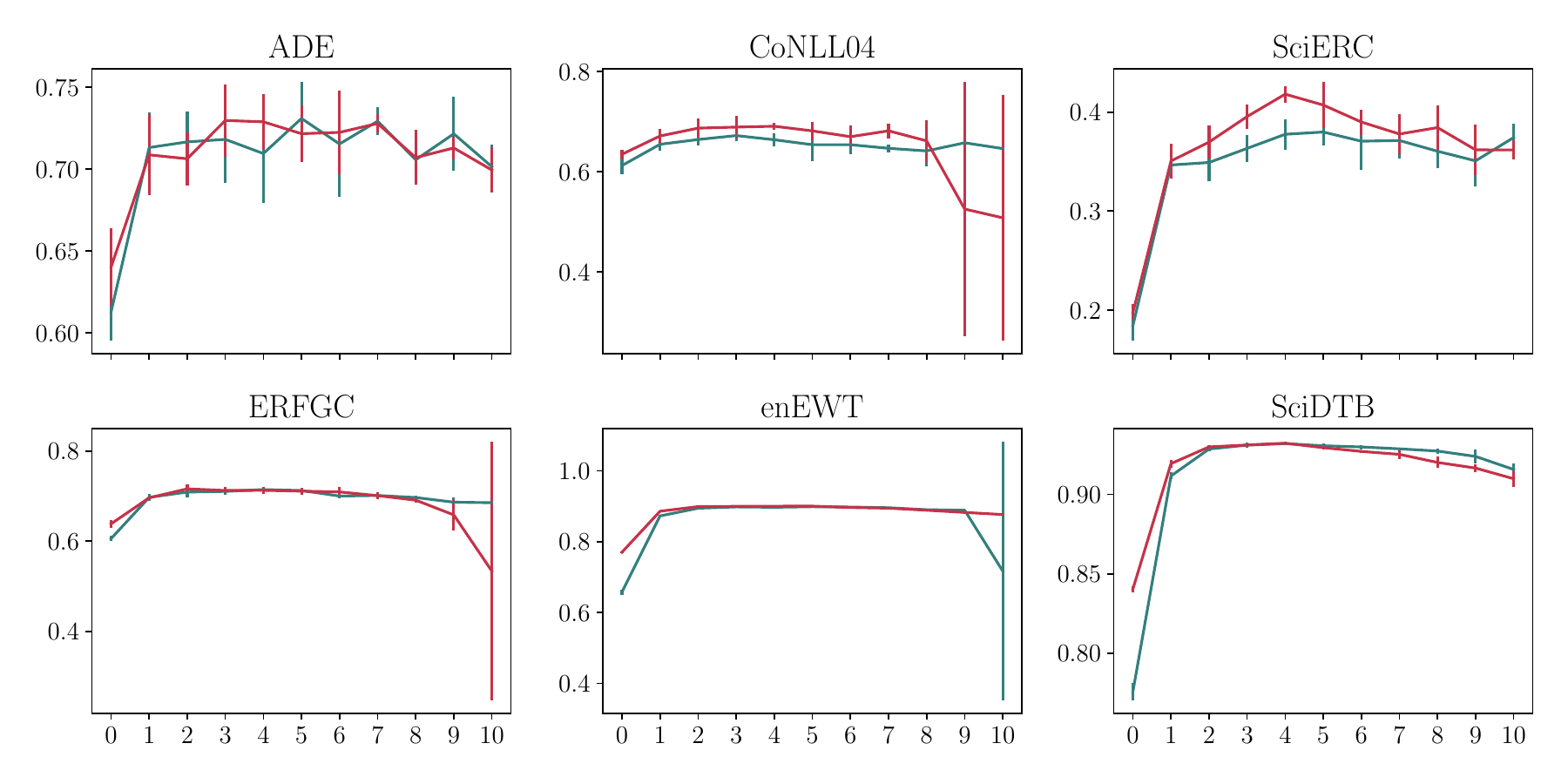}
    \caption{Micro-F$_1$ (SemDP) and LAS (SynDP) vs $N \in \{0, \dots, 10\}$ at 20$k$ training steps. Best per-dataset hyperparameters ($h_\psi$, $d_{\text{MLP}}$) with $\phi = \checkmark$ and $\mathbf{e}^{tag}_i = \checkmark$. Red = norm; blue = raw.
    }
    \label{fig:10_layers}
\end{figure}

Figure~\ref{fig:10_layers} depicts performance against the number of BiLSTM layers $N$ at 20$k$ training steps. Compared with the main setting's 2$k$ steps, we increase the amount of training because lower amounts resulted in very high F$_1$ standard deviations with as few as $N = 6$ layers in preliminary experiments. With a shallow stack of BiLSTM layers ($N < 6$), performance is greater when normalizing scores.
However, at deeper depths performance becomes very unstable for CoNLL04, ERFGC, and enEWT. Nonetheless, it is evident that score normalization is beneficial, with deeper stacks of BiLSTM layers reducing the effect of normalization, supporting our main-setting claims and experiments.

As observed in Section~\ref{sec:results}, adding trainable layers seemingly allows the parameters to scale the variance of the scores to compensate for the lack of explicit normalization.
As reported in Appendix~\ref{app:ft-sample-efficiency}, a similar behavior can be observed with fully fine-tuned models, where normalization yields diminishing benefits the more parameters we train.
This behavior points to a trade-off between the use of our technique and the amount of stacked BiLSTM layers, on a given dataset and at a given amount of training steps.

\subsection{MLP output dimension}
\label{sec:ablations:mlp-out}

As shown in Table~\ref{tab:hout-ablation}, without normalization, increasing the output dimension of the two MLPs responsible for projecting edge representations leads to a decrease in performance. This is in contrast with the behavior observed when using normalization, where performance is essentially independent from the output dimension. 
This is in line with our claim that higher variance in the edge scores causes lower performance. Normalizing the scores reduces the variance, which makes performance stable even at high output dimensions.
Once again, these results show the relationship between score variance and performance, with equivalent performance being achievable with fewer parameters.

\begin{table}
\centering
\caption{Performance in terms of F$_1$-measure when ablating over different output dimensions for the parser's MLPs ($\phi = \checkmark$, $\textbf{e}_i^{tag} = \checkmark$). Best in bold.}
\label{tab:hout-ablation}
\begin{tabular}{cccccc}
\toprule
\bf $a$ & \bf $d_{\text{MLP}}$ & \bf ADE & \bf CoNLL04 & \bf SciERC & \bf ERFGC \\
\midrule
 \multicolumn{2}{c}{$(N, h_\psi) = $} & (3, 200) & (3, 400) & (3, 300) & (3, 400) \\
\midrule
\multirow{3}{*}{$1$}
 & 100 & 0.662 \tiny{$\pm0.027$} & 0.579 \tiny{$\pm0.030$} & 0.302 \tiny{$\pm0.018$} & 0.709 \tiny{$\pm0.008$} \\
 & 300 & 0.658 \tiny{$\pm0.018$} & 0.562 \tiny{$\pm0.021$} & 0.299 \tiny{$\pm0.023$} & 0.705 \tiny{$\pm0.011$} \\
 & 500 & 0.657 \tiny{$\pm0.018$} & 0.566 \tiny{$\pm0.019$} & 0.281 \tiny{$\pm0.016$} & 0.701 \tiny{$\pm0.009$} \\
 \midrule
\multirow{3}{*}{$\frac{1}{\sqrt{d}}$}
 & 100 & 0.686 \tiny{$\pm0.025$} & 0.586 \tiny{$\pm0.013$} & 0.318 \tiny{$\pm0.017$} & \bf 0.712 \tiny{$\pm0.008$} \\
 & 300 & 0.680 \tiny{$\pm0.018$} & \bf 0.602 \tiny{$\pm0.017$} & \bf 0.320 \tiny{$\pm0.013$} & 0.708 \tiny{$\pm0.008$} \\
 & 500 & \bf 0.685 \tiny{$\pm0.019$} & 0.600 \tiny{$\pm0.015$} & 0.311 \tiny{$\pm0.009$} & 0.707 \tiny{$\pm0.004$} \\
\bottomrule
\end{tabular}
\end{table}

\subsection{BiLSTM hidden size}
\label{sec:ablations:bilstm-hidden}

As Table~\ref{tab:hpsi-ablation} shows, the hidden size of the BiLSTMs does not have a visible effect on performance. This is especially the case when applying score normalization, which produces smaller standard deviations.
As a result, not only do models perform better with biaffine score normalization, but performance is also less dependent on the hidden size of the BiLSTM encoders. This supports our claim that score normalization is a useful technique to obtain the same performance with lower parameter counts, since the models tend to perform comparably, despite lower BiLSTM hidden sizes.

\begin{table}
\centering
\caption{Performance in terms of F$_1$-measure when ablating over different hidden sizes for the parser's stacked BiLSTMs ($\phi = \checkmark$, $\textbf{e}_i^{tag} = \checkmark$). Best in bold.}
\label{tab:hpsi-ablation}
\begin{tabular}{cccccc}
\toprule
\bf $a$ & \bf $h_\psi$ & \bf ADE & \bf CoNLL04 & \bf SciERC & \bf ERFGC \\
\midrule
\multicolumn{2}{c}{$(N, d_{\text{MLP}}) = $} & (3, 100) & (3, 300) & (3, 300) & (3, 300) \\
\midrule
\multirow{4}{*}{$1$} 
 & 100 & 0.682 \tiny{$\pm0.021$} & 0.580 \tiny{$\pm0.031$} & 0.291 \tiny{$\pm0.021$} & 0.693 \tiny{$\pm0.011$} \\
 & 200 & 0.662 \tiny{$\pm0.027$} & 0.582 \tiny{$\pm0.006$} & 0.286 \tiny{$\pm0.032$} & 0.703 \tiny{$\pm0.007$} \\
 & 300 & 0.674 \tiny{$\pm0.029$} & 0.585 \tiny{$\pm0.026$} & 0.299 \tiny{$\pm0.023$} & 0.703 \tiny{$\pm0.006$} \\
 & 400 & 0.663 \tiny{$\pm0.032$} & 0.562 \tiny{$\pm0.021$} & 0.289 \tiny{$\pm0.038$} & 0.705 \tiny{$\pm0.011$} \\
 \midrule
\multirow{4}{*}{$\frac{1}{\sqrt{d}}$}
 & 100 & 0.685 \tiny{$\pm0.020$} & 0.600 \tiny{$\pm0.018$} & 0.302 \tiny{$\pm0.013$} & 0.698 \tiny{$\pm0.008$} \\
 & 200 & \bf 0.686 \tiny{$\pm0.025$} & \bf 0.610 \tiny{$\pm0.018$} & 0.314 \tiny{$\pm0.019$} & 0.702 \tiny{$\pm0.008$} \\
 & 300 & 0.678 \tiny{$\pm0.012$} & 0.599 \tiny{$\pm0.012$} & \bf 0.320 \tiny{$\pm0.013$} & \bf 0.711 \tiny{$\pm0.005$} \\
 & 400 & 0.674 \tiny{$\pm0.011$} & 0.602 \tiny{$\pm0.017$} & 0.306 \tiny{$\pm0.016$} & 0.708 \tiny{$\pm0.008$} \\
\bottomrule
\end{tabular}
\end{table}

\subsection{LayerNorm and parameter initialization}
\label{sec:ablations:norm-init}

In this section, we analyze the effects of using a Xavier normal initialization ($I_{par} \sim \mathcal{N}$)\footnote{\url{https://docs.pytorch.org/docs/stable/nn.init.html\#torch.nn.init.xavier_normal_}} for the weights of the two projections $\text{MLP}^{(edge-head)}$ and $\text{MLP}^{(edge-dept)}$, along with the edge biaffine layer $f^{(edge)}(\, \cdot\ ;W_{e})$. We also apply a LayerNorm function $\texttt{LN}_\psi$~\cite{ba2016layer} for each of the BiLSTM layers. 

As reported in Table~\ref{tab:layer-norm}, on average the best results are obtained with the base setting, i.e., with $I_{par} \sim \mathcal{U}$ and $\texttt{LN}_\psi = \times$. Using LayerNorm can provide a performance boost for some datasets, especially with uniform initialization. However, it makes performance unstable for some. As a matter of fact, when using LayerNorm with three BiLSTM layers on CoNLL04 and SciERC, the model often breaks and is not able to converge.
Based on average performance, then, the best models are obtained by scaling biaffine scores ($a = 1/\sqrt{d}$) and initializing the parser with a uniform weight distribution, without any LayerNorm in between the stacked BiLSTM layers.

\begin{table}
\centering
\caption{Ablation over the best hyperparameters ($h_\psi, d_{\text{MLP}}$) and $\phi = \checkmark$ and $\textbf{e}_i^{tag} = \checkmark$, using LayerNorm layers $\texttt{LN}_\psi \in \{\checkmark, \times\}$ and Xavier uniform/normal initialization $I_{par} \sim \{\mathcal{U}, \mathcal{N}\}$ for the parser. The first quarter of the table is equivalent to Table~\ref{tab:best-results}. Best in bold, second-best underlined.
}
\label{tab:layer-norm}
\begin{tabular}{cccccccccc}
\toprule
\bf $I_{par}$ & \bf $\texttt{LN}_\psi$ & \bf $a$ & \bf $N$ & \bf ADE & \bf CoNLL04 & \bf SciERC & \bf ERFGC & \bf Mean\\
\midrule
\multicolumn{4}{l}{$(h_\psi, d_{\text{MLP}}) =$} & (200, 100) & (400, 300) & (300, 300) & (400, 300) & \\
\midrule
\multirow{16}{*}{$\mathcal{U}$} & $\square$ & \multirow[c]{4}{*}{$1$}
              & 0 & 0.541 \tiny{$\pm0.021$} & 0.399 \tiny{$\pm0.024$} & 0.147 \tiny{$\pm0.049$} & 0.548 \tiny{$\pm0.010$}  & \multirow{4}{*}{0.515}\\
& $\square$ & & 3 & 0.662 \tiny{$\pm0.027$} & 0.562 \tiny{$\pm0.021$} & 0.299 \tiny{$\pm0.023$} & 0.705 \tiny{$\pm0.011$}  & \\
& $\square$ & & 1 & 0.657 \tiny{$\pm0.011$} & 0.556 \tiny{$\pm0.021$} & 0.282 \tiny{$\pm0.009$} & 0.676 \tiny{$\pm0.010$}  & \\
& $\square$ & & 2 & 0.667 \tiny{$\pm0.011$} & 0.573 \tiny{$\pm0.025$} & 0.273 \tiny{$\pm0.010$} & 0.694 \tiny{$\pm0.010$}  & \\
\cmidrule{2-9}
& $\square$ & \multirow[c]{4}{*}{$\frac{1}{\sqrt{d}}$}
              & 0 & 0.567 \tiny{$\pm0.014$} & 0.438 \tiny{$\pm0.033$} & 0.181 \tiny{$\pm0.027$} & 0.612 \tiny{$\pm0.008$} & \multirow{4}{*}{\bf 0.541}\\
& $\square$ & & 1 & 0.668 \tiny{$\pm0.017$} & 0.597 \tiny{$\pm0.015$} & 0.299 \tiny{$\pm0.019$} & 0.692 \tiny{$\pm0.009$} & \\
& $\square$ & & 2 & 0.676 \tiny{$\pm0.019$} & 0.596 \tiny{$\pm0.014$} & 0.312 \tiny{$\pm0.011$} & 0.699 \tiny{$\pm0.009$} & \\
& $\square$ & & 3 & 0.686 \tiny{$\pm0.025$} & 0.602 \tiny{$\pm0.017$} & \bf 0.320 \tiny{$\pm0.013$} & 0.708 \tiny{$\pm0.008$} & \\
\cmidrule{2-9}
& $\blacksquare$ & \multirow[c]{4}{*}{$1$}
                   & 0 & 0.545 \tiny{$\pm0.018$} & 0.399 \tiny{$\pm0.024$} & 0.151 \tiny{$\pm0.014$} & 0.548 \tiny{$\pm0.010$} & \multirow{4}{*}{0.468} \\
& $\blacksquare$ & & 1 & 0.680 \tiny{$\pm0.014$} & 0.554 \tiny{$\pm0.042$} & 0.265 \tiny{$\pm0.028$} & 0.686 \tiny{$\pm0.007$} & \\
& $\blacksquare$ & & 2 & 0.679 \tiny{$\pm0.011$} & 0.578 \tiny{$\pm0.033$} & 0.260 \tiny{$\pm0.020$} & \bf 0.715 \tiny{$\pm0.012$} & \\
& $\blacksquare$ & & 3 & 0.680 \tiny{$\pm0.013$} & 0.117 \tiny{$\pm0.261$} & 0.060 \tiny{$\pm0.133$} & 0.569 \tiny{$\pm0.318$} & \\
\cmidrule{2-9}
& $\blacksquare$ & \multirow[c]{4}{*}{$\frac{1}{\sqrt{d}}$}
                   & 0 & 0.567 \tiny{$\pm0.014$} & 0.433 \tiny{$\pm0.029$} & 0.181 \tiny{$\pm0.027$} & 0.614 \tiny{$\pm0.004$} & \multirow{4}{*}{0.503} \\
& $\blacksquare$ & & 1 & 0.664 \tiny{$\pm0.017$} & 0.564 \tiny{$\pm0.037$} & 0.282 \tiny{$\pm0.029$} & 0.686 \tiny{$\pm0.004$} & \\
& $\blacksquare$ & & 2 & \bf 0.697 \tiny{$\pm0.022$} & \bf 0.623 \tiny{$\pm0.019$} & 0.300 \tiny{$\pm0.042$} & 0.702 \tiny{$\pm0.011$} & \\
& $\blacksquare$ & & 3 & 0.675 \tiny{$\pm0.019$} & 0.239 \tiny{$\pm0.327$} & 0.111 \tiny{$\pm0.152$} & 0.703 \tiny{$\pm0.006$} & \\
\midrule
\multirow{16}{*}{$\mathcal{N}$} & $\square$ & \multirow[c]{4}{*}{$1$}
              & 0 & 0.545 \tiny{$\pm0.017$} & 0.415 \tiny{$\pm0.014$} & 0.155 \tiny{$\pm0.019$} & 0.558 \tiny{$\pm0.009$} & \multirow{4}{*}{0.520} \\
& $\square$ & & 1 & 0.667 \tiny{$\pm0.014$} & 0.543 \tiny{$\pm0.017$} & 0.275 \tiny{$\pm0.020$} & 0.680 \tiny{$\pm0.015$} & \\
& $\square$ & & 2 & 0.674 \tiny{$\pm0.025$} & 0.576 \tiny{$\pm0.023$} & 0.272 \tiny{$\pm0.014$} & 0.699 \tiny{$\pm0.006$} & \\
& $\square$ & & 3 & 0.672 \tiny{$\pm0.028$} & 0.580 \tiny{$\pm0.022$} & 0.297 \tiny{$\pm0.019$} & 0.705 \tiny{$\pm0.006$} & \\
\cmidrule{2-9}
& $\square$ & \multirow[c]{4}{*}{$\frac{1}{\sqrt{d}}$}
              & 0 & 0.570 \tiny{$\pm0.013$} & 0.454 \tiny{$\pm0.006$} & 0.181 \tiny{$\pm0.023$} & 0.613 \tiny{$\pm0.007$} & \multirow{4}{*}{\underline{0.538}} \\
& $\square$ & & 1 & 0.671 \tiny{$\pm0.015$} & 0.578 \tiny{$\pm0.031$} & 0.299 \tiny{$\pm0.030$} & 0.685 \tiny{$\pm0.010$} & \\
& $\square$ & & 2 & 0.671 \tiny{$\pm0.031$} & 0.593 \tiny{$\pm0.016$} & 0.301 \tiny{$\pm0.019$} & 0.704 \tiny{$\pm0.007$} & \\
& $\square$ & & 3 & 0.681 \tiny{$\pm0.024$} & 0.590 \tiny{$\pm0.014$} & \underline{0.315} \tiny{$\pm0.009$} & 0.702 \tiny{$\pm0.011$} & \\

\cmidrule{2-9}
& $\blacksquare$ & \multirow[c]{4}{*}{$1$}
                   & 0 & 0.545 \tiny{$\pm0.017$} & 0.415 \tiny{$\pm0.014$} & 0.155 \tiny{$\pm0.019$} & 0.558 \tiny{$\pm0.009$} & \multirow{4}{*}{0.469} \\
& $\blacksquare$ & & 1 & 0.679 \tiny{$\pm0.012$} & 0.582 \tiny{$\pm0.012$} & 0.259 \tiny{$\pm0.010$} & 0.680 \tiny{$\pm0.016$} & \\
& $\blacksquare$ & & 2 & 0.668 \tiny{$\pm0.015$} & 0.580 \tiny{$\pm0.041$} & 0.284 \tiny{$\pm0.024$} & \underline{0.711} \tiny{$\pm0.005$} & \\
& $\blacksquare$ & & 3 & 0.679 \tiny{$\pm0.018$} & 0.000 \tiny{$\pm0.000$} & 0.000 \tiny{$\pm0.000$} & \underline{0.711} \tiny{$\pm0.009$} & \\
\cmidrule{2-9}
& $\blacksquare$ & \multirow[c]{4}{*}{$\frac{1}{\sqrt{d}}$}
                   & 0 & 0.570 \tiny{$\pm0.013$} & 0.454 \tiny{$\pm0.006$} & 0.181 \tiny{$\pm0.023$} & 0.613 \tiny{$\pm0.007$} & \multirow{4}{*}{0.512} \\
& $\blacksquare$ & & 1 & 0.675 \tiny{$\pm0.019$} & 0.578 \tiny{$\pm0.022$} & 0.280 \tiny{$\pm0.022$} & 0.682 \tiny{$\pm0.006$} & \\
& $\blacksquare$ & & 2 & \underline{0.687} \tiny{$\pm0.021$} & \underline{0.609} \tiny{$\pm0.012$} & 0.308 \tiny{$\pm0.012$} & 0.702 \tiny{$\pm0.011$} & \\
& $\blacksquare$ & & 3 & 0.678 \tiny{$\pm0.009$} & 0.476 \tiny{$\pm0.266$} & 0.000 \tiny{$\pm0.000$} & 0.701 \tiny{$\pm0.006$} & \\

\bottomrule
\end{tabular}
\end{table}

\subsection{Full fine-tuning and sample efficiency}
\label{app:ft-sample-efficiency}

In this section, we present the results obtained when fine-tuning BERT$_{base}$, DeBERTa$_{base}$, BERT$_{large}$, and DeBERTa$_{large}$ over $10k$ steps. In our previous settings, we only used a frozen BERT$_{base}$, whose last hidden states we fed as input to the tagger and parser.
This evaluation allows us to test the effectiveness of our approach in a setting in which the number of learnable parameters is unconstrained. 
In this experiment, we use different learning rates for the base models ($\eta = 1\times 10^{-4}$) and the large models ($\eta = 3\times 10^{-5}$) and we apply gradient norm clipping with $\|\nabla\|_{max} = 1.0$.
In the case of the large models, we also use a cosine schedule with warm-up over 6\% of the steps.
We adopt these measures because during early trials we experienced sudden mid-run gradient explosions.
In this case, we do not use any downstream BiLSTMs and only vary whether we use biaffine score normalization in the parser.

Table~\ref{tab:fine-tuning-results-results_ft_10k} reports the performance in terms of micro-averaged F$_1$-measure for the best models, picked based on top validation performance, evaluated every 100 steps. As the table shows, normalizing the scores generally does not increase top performance when fully fine-tuning these models. This is in line with our theoretical results, since tuning all of their $\mytilde100-400M$ parameters can compensate for the lack of biaffine score normalization.

In order to have a stronger indication of whether score normalization also works in this setting, we study the performance on the test set versus the amount of training steps.
Figure~\ref{fig:convergence-base} and Figure~\ref{fig:convergence-large} respectively visualize the performance of the base and large models on the test set in terms of micro-averaged F$_1$-measure over $10k$ training steps.
We still use early stopping, which is why some of the series are cut short before reaching $10k$ steps.

As Figure~\ref{fig:convergence-base} shows, a one-tailed Wilcoxon signed-rank test finds the difference in performance throughout the training to be significantly higher when normalizing scores. Since the average test performance of the models is similar with and without normalization, the gap between the two curves being statistically significant indicates faster convergence with normalization.

For both BERT$_{base}$ and DeBERTa$_{base}$, the difference in convergence speed and performance is statistically significant on ADE, CoNLL04, and SciERC. The same is true for BERT$_{large}$ only on ADE. However, the effect is statistically significant with DeBERTa$_{large}$ for all datasets. 
This indicates our score normalization approach can produce positive effects in terms of sample efficiency also when fine-tuning models with hundreds of millions of trainable parameters.


\begin{table}
\centering
\caption{Results for the fully fine-tuned models ($h_\psi = 400$, $h_{out} = 500$, $\phi = \checkmark$, and $\textbf{e}_i^{tag} = \checkmark$).}
\label{tab:fine-tuning-results-results_ft_10k}
\begin{tabular}{lcccccc}
\toprule
\bf Model & $a$ & \bf ADE & \bf CoNLL04 & \bf SciERC & \bf ERFGC & \bf Mean\\
\midrule
\multirow{2}{*}{BERT$_{base}$}
                 & $1$ & 0.748 \tiny{$\pm0.028$} & 0.613 \tiny{$\pm0.019$} & 0.414 \tiny{$\pm0.011$} & 0.726 \tiny{$\pm0.006$} & 0.321 \\
& $\frac{1}{\sqrt{d}}$ & 0.731 \tiny{$\pm0.025$} & 0.629 \tiny{$\pm0.022$} & 0.412 \tiny{$\pm0.016$} & 0.726 \tiny{$\pm0.006$} & 0.321 \\
\midrule
\multirow{2}{*}{BERT$_{large}$}
                 & $1$ & 0.748 \tiny{$\pm0.016$} & 0.700 \tiny{$\pm0.019$} & 0.473 \tiny{$\pm0.011$} & 0.750 \tiny{$\pm0.006$} & 0.340 \\
& $\frac{1}{\sqrt{d}}$ & 0.777 \tiny{$\pm0.011$} & 0.697 \tiny{$\pm0.014$} & 0.446 \tiny{$\pm0.025$} & 0.748 \tiny{$\pm0.010$} & 0.341 \\
\midrule
\multirow{2}{*}{DeBERTa$_{base}$}
                 & $1$ & 0.754 \tiny{$\pm0.013$} & 0.674 \tiny{$\pm0.014$} & 0.429 \tiny{$\pm0.015$} & 0.751 \tiny{$\pm0.002$} & 0.332 \\
& $\frac{1}{\sqrt{d}}$ & 0.761 \tiny{$\pm0.021$} & 0.700 \tiny{$\pm0.013$} & 0.425 \tiny{$\pm0.019$} & 0.751 \tiny{$\pm0.010$} & 0.338 \\
\midrule
\multirow{2}{*}{DeBERTa$_{large}$}
                 & $1$ & 0.786 \tiny{$\pm0.010$} & 0.739 \tiny{$\pm0.016$} & 0.478 \tiny{$\pm0.019$} & 0.763 \tiny{$\pm0.006$} & 0.352 \\
& $\frac{1}{\sqrt{d}}$ & 0.794 \tiny{$\pm0.015$} & 0.747 \tiny{$\pm0.013$} & 0.476 \tiny{$\pm0.010$} & 0.763 \tiny{$\pm0.007$} & 0.353 \\
\bottomrule
\end{tabular}
\end{table}

\begin{figure}[t]
    \centering
    \begin{subfigure}[t]{\textwidth}
        \centering
        \includegraphics[width=\textwidth]{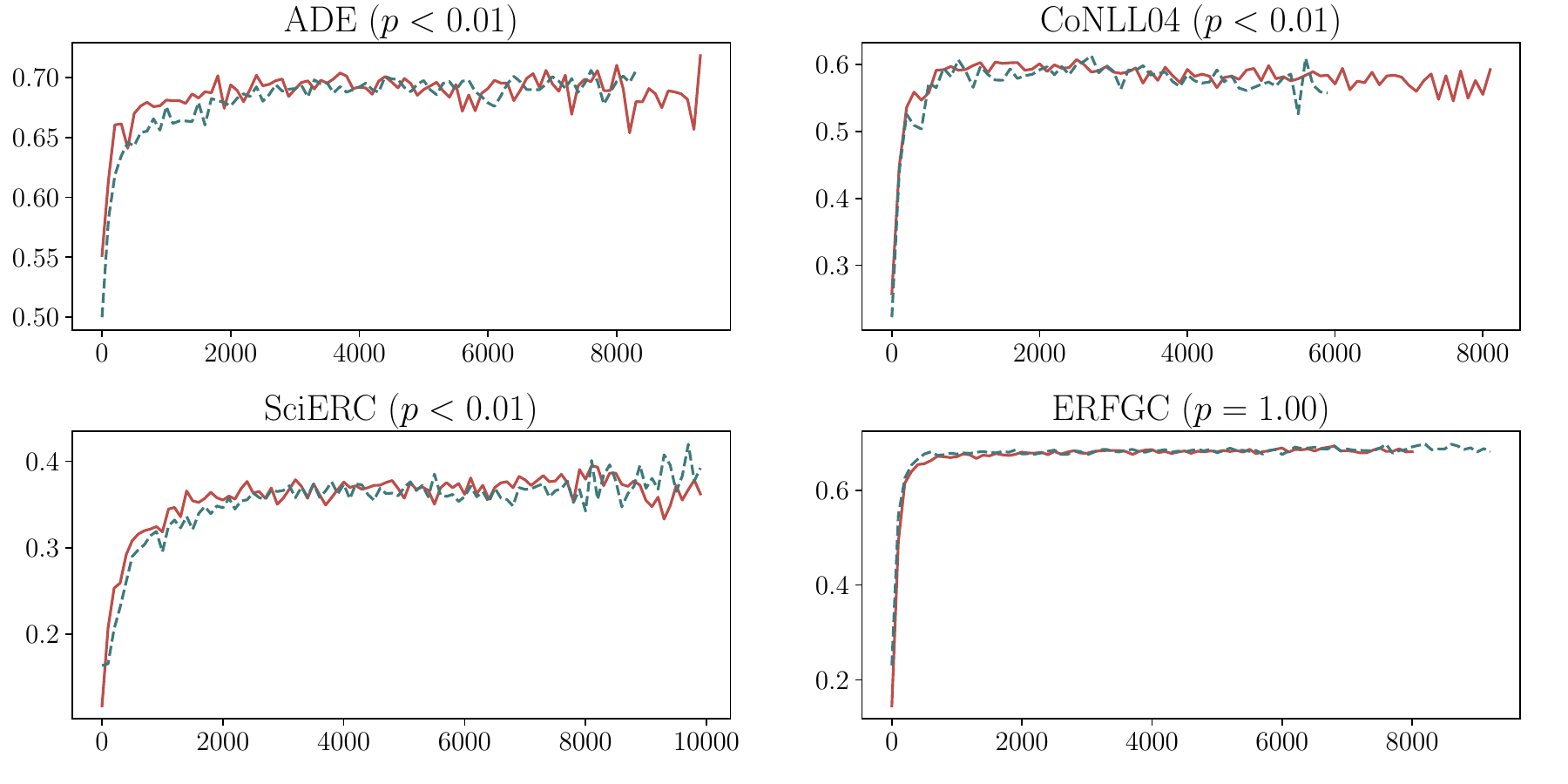}
        \caption{BERT$_{base}$}
        \label{fig:convergence-bert-base}
    \end{subfigure}
    \hfill
    \begin{subfigure}[t]{\textwidth}
        \centering
        \includegraphics[width=\textwidth]{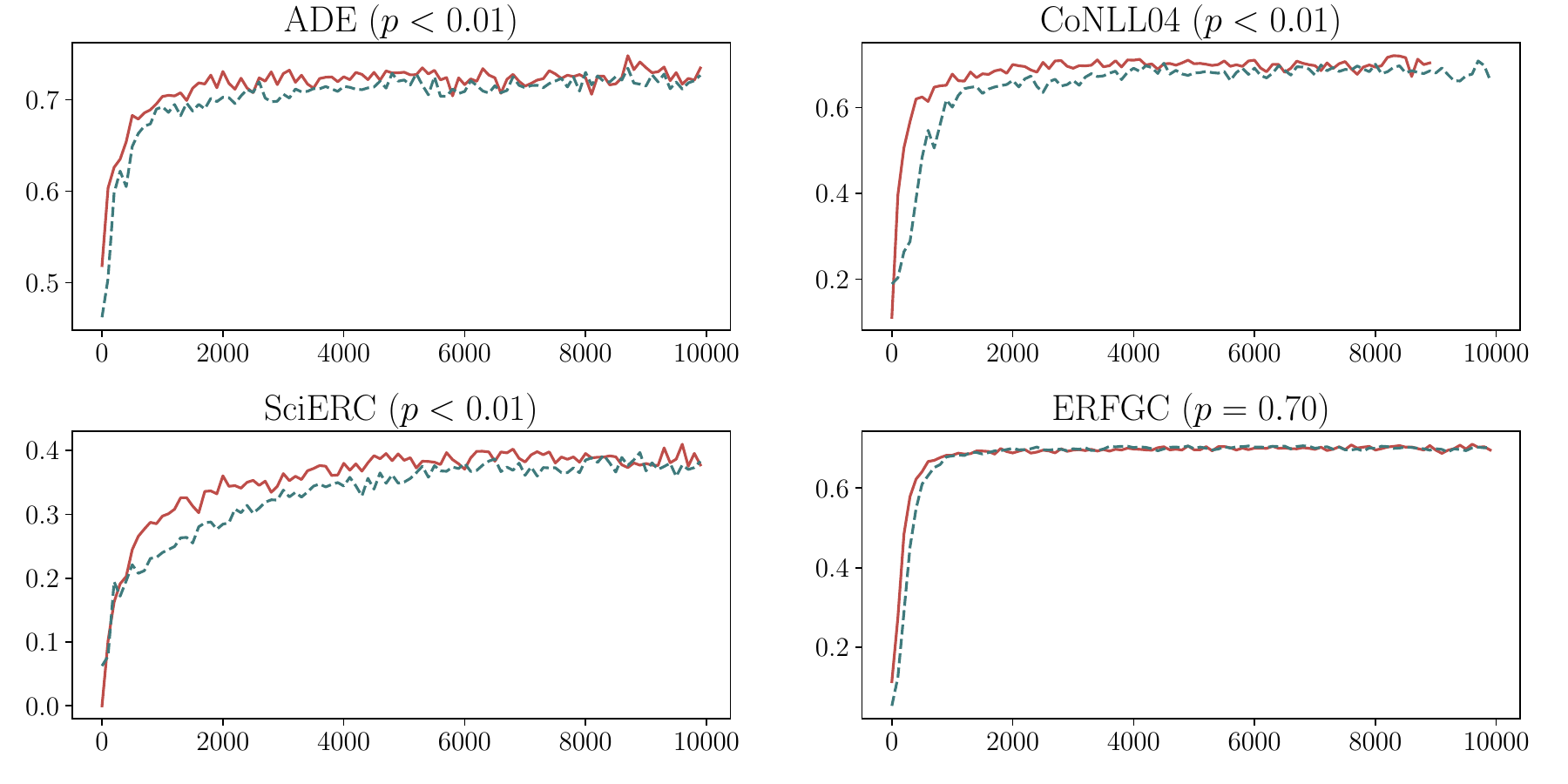}
        \caption{DeBERTa$_{base}$}
        \label{fig:convergence-deberta-base}
    \end{subfigure}
    \caption{Performance in terms of F$_1$-measure vs the number of training steps for the base models on the SemDP datasets. Red = norm; blue = raw. The $p$-values refer to the performance being greater with score normalization (one-tailed Wilcoxon signed-rank test).}
    \label{fig:convergence-base}
\end{figure}

\begin{figure}[t]
    \centering
    \begin{subfigure}[t]{\textwidth}
        \centering
        \includegraphics[width=\textwidth]{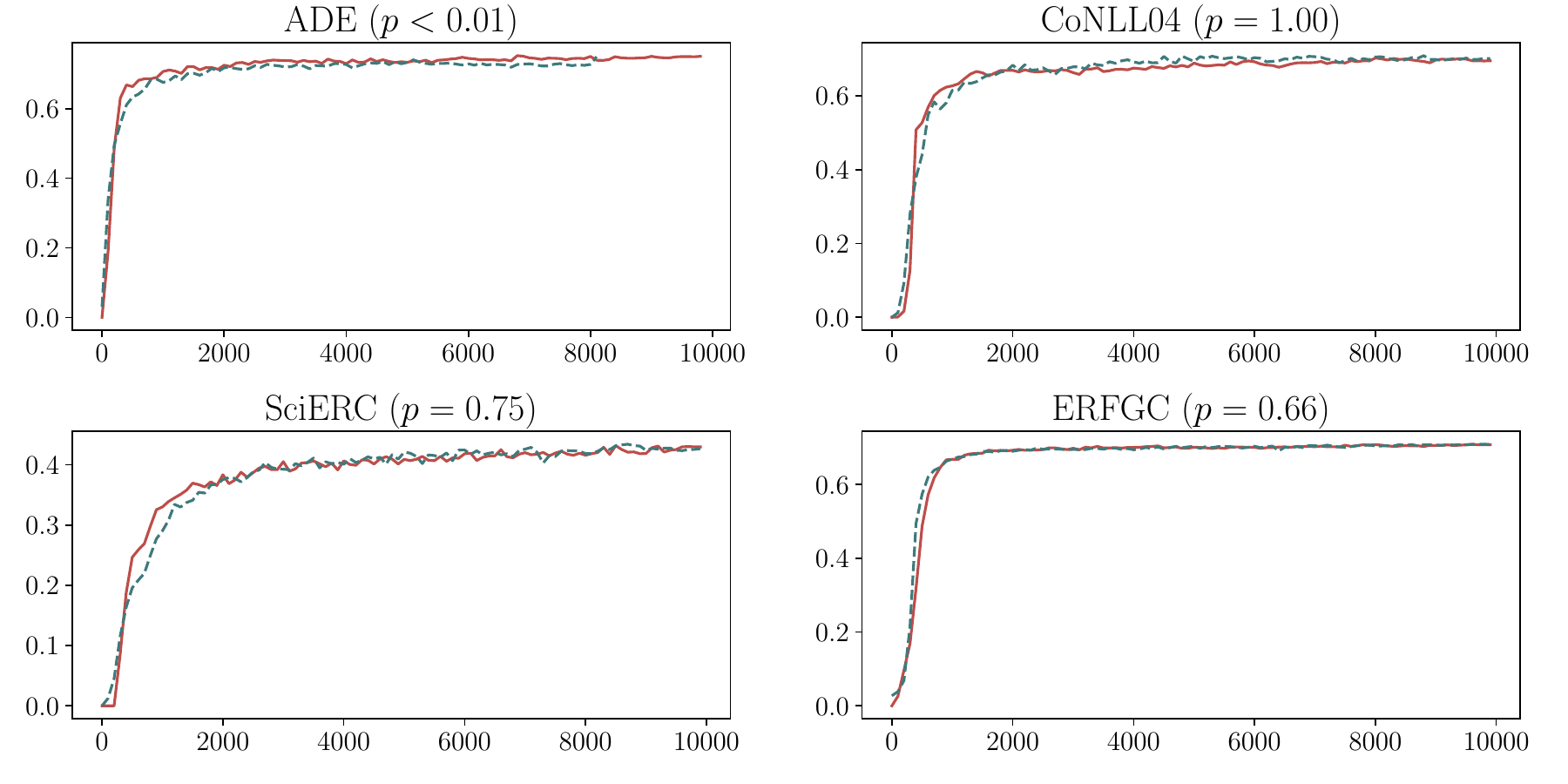}
        \caption{BERT$_{large}$}
        \label{fig:convergence-bert-large}
    \end{subfigure}
    \hfill
    \begin{subfigure}[t]{\textwidth}
        \centering
        \includegraphics[width=\textwidth]{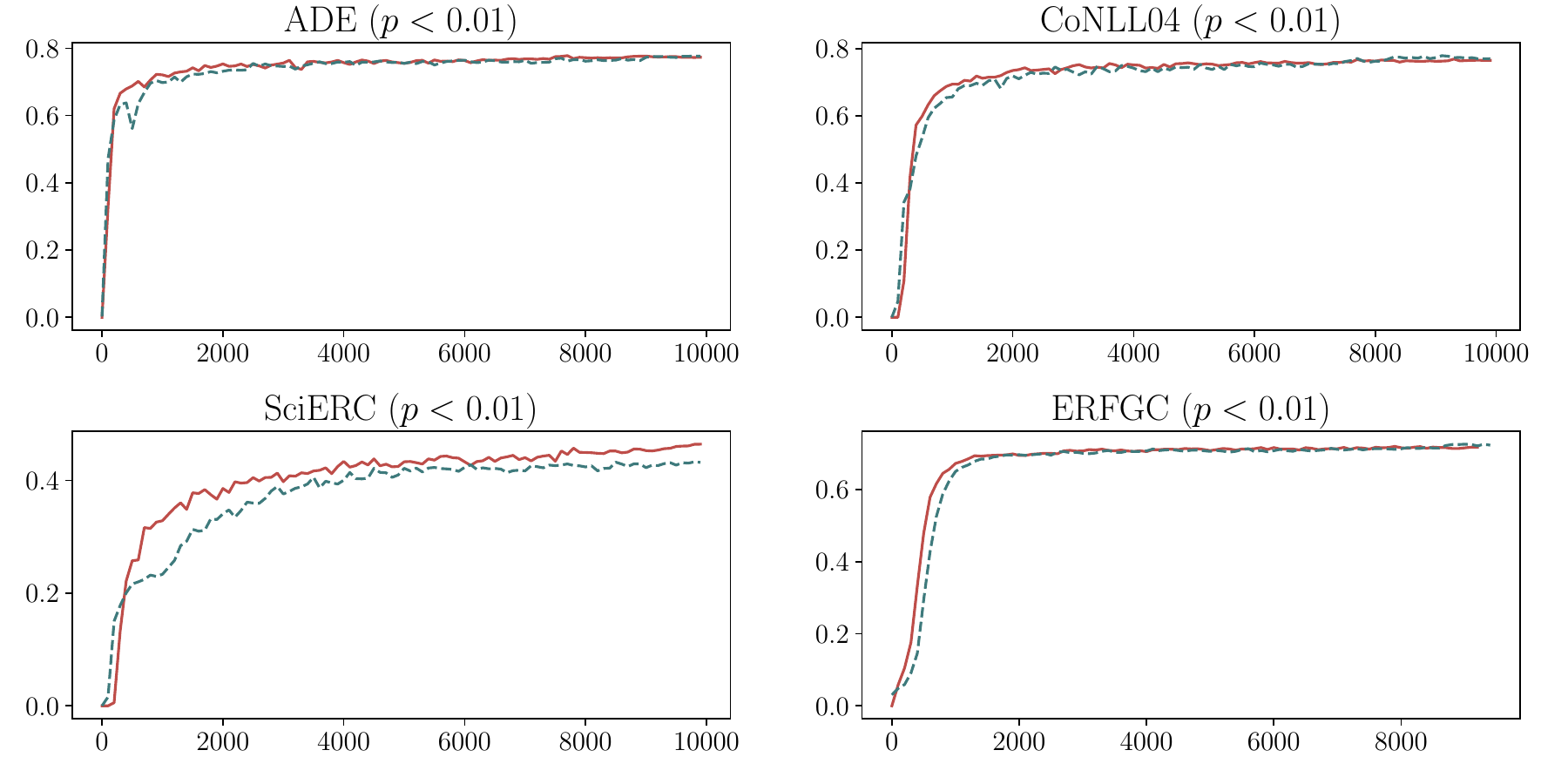}
        \caption{DeBERTa$_{large}$}
        \label{fig:convergence-deberta-large}
    \end{subfigure}
    \caption{Performance in terms of F$_1$-measure vs the number of training steps for the large models on the SemDP datasets. Red = norm; blue = raw. The $p$-values refer to the performance being greater with score normalization (one-tailed Wilcoxon signed-rank test).}
    \label{fig:convergence-large}
\end{figure}

\begin{table}
\centering
\caption{
Micro-F$_1$ (SemDP) and LAS (SynDP) on all tasks ($\phi$ = \checkmark, $\textbf{e}_i^{tag} = \checkmark$). Best in bold.}
\label{tab:best-results-all-metrics}
\begin{tabular}{c@{\hspace{2mm}}c@{\hspace{2mm}}c@{\hspace{2mm}}c@{\hspace{2mm}}c@{\hspace{2mm}}c@{\hspace{2mm}}c@{\hspace{2mm}}c@{\hspace{2mm}}c@{\hspace{2mm}}}
\toprule
\bf Metric & \bf $a$ & \bf $N$ & \bf ADE & \bf CoNLL04 & \bf SciERC & \bf ERFGC & \bf enEWT & \bf SciDTB \\
\midrule
\multicolumn{3}{c}{$(h_\psi, d_{\text{MLP}}) = $} & (200, 100) & (400, 300) & (300, 300) & (400, 300) & (400, 500) & (400, 500) \\
\midrule
\multirow{8}{*}{rels.}
& \multirow{4}{*}{$1$}
    & 0 & 0.541 \tiny{$\pm0.021$} & 0.399 \tiny{$\pm0.024$} & 0.147 \tiny{$\pm0.049$} & 0.548 \tiny{$\pm0.010$} & 0.559 \tiny{$\pm0.005$} & 0.729 \tiny{$\pm0.004$} \\
 &  & 1 & 0.657 \tiny{$\pm0.011$} & 0.556 \tiny{$\pm0.021$} & 0.282 \tiny{$\pm0.009$} & 0.676 \tiny{$\pm0.010$} & 0.771 \tiny{$\pm0.006$} & 0.892 \tiny{$\pm0.002$} \\
 &  & 2 & 0.667 \tiny{$\pm0.011$} & 0.573 \tiny{$\pm0.025$} & 0.273 \tiny{$\pm0.010$} & 0.694 \tiny{$\pm0.010$} & 0.796 \tiny{$\pm0.006$} & 0.910 \tiny{$\pm0.002$} \\
 &  & 3 & 0.662 \tiny{$\pm0.027$} & 0.562 \tiny{$\pm0.021$} & 0.299 \tiny{$\pm0.023$} & 0.705 \tiny{$\pm0.011$} & 0.804 \tiny{$\pm0.006$} & 0.915 \tiny{$\pm0.002$} \\
 \cmidrule{2-9}
& \multirow{4}{*}{$\frac{1}{\sqrt{d}}$}
    & 0 & 0.567 \tiny{$\pm0.014$} & 0.438 \tiny{$\pm0.033$} & 0.181 \tiny{$\pm0.027$} & 0.612 \tiny{$\pm0.008$} & 0.646 \tiny{$\pm0.002$} & 0.796 \tiny{$\pm0.002$} \\
 &  & 1 & 0.668 \tiny{$\pm0.017$} & 0.597 \tiny{$\pm0.015$} & 0.299 \tiny{$\pm0.019$} & 0.692 \tiny{$\pm0.009$} & 0.789 \tiny{$\pm0.003$} & 0.904 \tiny{$\pm0.002$} \\
 &  & 2 & 0.676 \tiny{$\pm0.019$} & 0.596 \tiny{$\pm0.014$} & 0.312 \tiny{$\pm0.011$} & 0.699 \tiny{$\pm0.009$} & 0.805 \tiny{$\pm0.003$} & 0.916 \tiny{$\pm0.002$} \\
 &  & 3 & \bf 0.686 \tiny{$\pm0.025$} & \bf 0.602 \tiny{$\pm0.017$} & \bf 0.320 \tiny{$\pm0.013$} & \bf 0.708 \tiny{$\pm0.008$} & \bf 0.807 \tiny{$\pm0.005$} & \bf 0.919 \tiny{$\pm0.001$} \\
\midrule
\multirow{8}{*}{edges}
& \multirow{4}{*}{$1$}
    & 0 & 0.536 \tiny{$\pm0.009$} & 0.415 \tiny{$\pm0.020$} & 0.173 \tiny{$\pm0.050$} & 0.601 \tiny{$\pm0.013$} & 0.589 \tiny{$\pm0.006$} & 0.745 \tiny{$\pm0.005$} \\
 &  & 1 & 0.652 \tiny{$\pm0.009$} & 0.566 \tiny{$\pm0.022$} & 0.321 \tiny{$\pm0.009$} & 0.744 \tiny{$\pm0.013$} & 0.793 \tiny{$\pm0.006$} & 0.901 \tiny{$\pm0.002$} \\
 &  & 2 & 0.657 \tiny{$\pm0.021$} & 0.586 \tiny{$\pm0.017$} & 0.322 \tiny{$\pm0.017$} & 0.769 \tiny{$\pm0.014$} & 0.819 \tiny{$\pm0.006$} & 0.919 \tiny{$\pm0.002$} \\
 &  & 3 & 0.649 \tiny{$\pm0.027$} & 0.578 \tiny{$\pm0.011$} & 0.355 \tiny{$\pm0.028$} & 0.782 \tiny{$\pm0.007$} & 0.827 \tiny{$\pm0.006$} & 0.924 \tiny{$\pm0.002$} \\
 \cmidrule{2-9}
& \multirow{4}{*}{$\frac{1}{\sqrt{d}}$}
    & 0 & 0.549 \tiny{$\pm0.014$} & 0.453 \tiny{$\pm0.032$} & 0.203 \tiny{$\pm0.030$} & 0.674 \tiny{$\pm0.007$} & 0.682 \tiny{$\pm0.003$} & 0.815 \tiny{$\pm0.001$} \\
 &  & 1 & 0.654 \tiny{$\pm0.020$} & 0.598 \tiny{$\pm0.021$} & 0.351 \tiny{$\pm0.019$} & 0.762 \tiny{$\pm0.009$} & 0.810 \tiny{$\pm0.003$} & 0.913 \tiny{$\pm0.002$} \\
 &  & 2 & 0.660 \tiny{$\pm0.015$} & 0.600 \tiny{$\pm0.008$} & 0.367 \tiny{$\pm0.015$} & 0.775 \tiny{$\pm0.008$} & 0.827 \tiny{$\pm0.003$} & 0.925 \tiny{$\pm0.002$} \\
 &  & 3 & \bf 0.666 \tiny{$\pm0.028$} & \bf 0.610 \tiny{$\pm0.022$} & \bf 0.377 \tiny{$\pm0.021$} & \bf 0.786 \tiny{$\pm0.004$} & \bf 0.829 \tiny{$\pm0.004$} & \bf 0.928 \tiny{$\pm0.002$} \\
\midrule
\multirow{8}{*}{tags}
& \multirow{4}{*}{$1$}
    & 0 & 0.615 \tiny{$\pm0.115$} & 0.285 \tiny{$\pm0.150$} & 0.014 \tiny{$\pm0.015$} & 0.662 \tiny{$\pm0.073$} \\
 &  & 1 & 0.665 \tiny{$\pm0.147$} & 0.671 \tiny{$\pm0.018$} & 0.049 \tiny{$\pm0.014$} & 0.794 \tiny{$\pm0.057$} \\
 &  & 2 & 0.746 \tiny{$\pm0.011$} & 0.648 \tiny{$\pm0.074$} & 0.060 \tiny{$\pm0.012$} & 0.838 \tiny{$\pm0.016$} \\
 &  & 3 & \bf 0.768 \tiny{$\pm0.022$} & 0.669 \tiny{$\pm0.033$} & \bf 0.062 \tiny{$\pm0.005$} & 0.853 \tiny{$\pm0.013$} \\
 \cmidrule{2-7}
& \multirow{4}{*}{$\frac{1}{\sqrt{d}}$}
    & 0 & 0.604 \tiny{$\pm0.134$} & 0.464 \tiny{$\pm0.115$} & 0.046 \tiny{$\pm0.005$} & 0.735 \tiny{$\pm0.066$} \\
 &  & 1 & 0.734 \tiny{$\pm0.048$} & 0.702 \tiny{$\pm0.031$} & 0.058 \tiny{$\pm0.014$} & 0.857 \tiny{$\pm0.010$} \\
 &  & 2 & 0.724 \tiny{$\pm0.064$} & 0.688 \tiny{$\pm0.080$} & 0.060 \tiny{$\pm0.007$} & \bf 0.864 \tiny{$\pm0.018$} \\
 &  & 3 & 0.762 \tiny{$\pm0.017$} & \bf 0.690 \tiny{$\pm0.062$} & 0.057 \tiny{$\pm0.012$} & 0.850 \tiny{$\pm0.022$} \\
\bottomrule
\end{tabular}
\end{table}

\section{Full main-setting results}
\label{app:full-results}

Table~\ref{tab:best-results-all-metrics} reports the main-setting results for the best hyperparameter combinations for all three tasks of predicting tags, edges, and relations of the semantic dependency graphs. The first part of Table~\ref{tab:best-results-all-metrics} is identical to the bottom of Table~\ref{tab:best-results}. Since we already discussed the performance for labeled edges in Section~\ref{sec:results}, we forgo discussing them again in this appendix.

As regards unlabeled edges, using score normalization improves mean performance for all datasets, similarly to relations.
This is expected, as unlabeled edges directly influence the prediction of the edge labels, together with entity class prediction.
As we have already observed for relations, the performance boost provided by normalization is particularly great for SciERC at $N = 0$.

It is also interesting to notice that for CoNLL04, the performance for the unlabeled edge prediction task is only slightly higher than that of the relations. In fact, as regards ADE, the opposite is true, with the labeled performance seemingly being higher. Due to the high variances, it would therefore seem that for these two datasets there is essentially no difference in difficulty between the labeled and unlabeled edge prediction tasks. In the case of ADE, this makes perfect sense, since there is only one possible relation. For CoNLL04, the amount of relation types is rather limited as well; therefore, it is sensible for the performance to also be similar in this case. Indeed, when looking at SciERC and ERFGC, the performance gap between labeled and unlabeled tasks is considerable. This is a strong indication that CoNLL04 is thus found somewhere in the middle, where the number of possible relations only slightly affects labeled performance.

As regards enEWT and SciDTB, the performance on the prediction of edges (UAS) and relations (LAS) is also rather similar. Since in this case the predictions involve syntactic rather than semantic relations, the similar performance hints at relations being easier to predict in SynDP than in SemDP.

In the tagging task, very high standard deviations can be observed. In the case of ADE and CoNLL04, this could be caused by our choice of $\lambda_{1} = 0.1$ being too low a value. 
Regarding SciERC, as already mentioned, most of the entities in the validation and test sets are not found in the training set. This means it is not possible to train the model to recognize them, which results in the abysmal tagging performance reported in Table~\ref{tab:best-results-all-metrics}. This makes it challenging to train good tag embeddings which can help inform the edge and relation prediction tasks. Surprisingly, we notice that for CoNLL04 and ERFGC the tagging performance is also seemingly higher with score normalization. However, the overlaps of the standard deviations are too high to be able to make any claims on the matter.


\section{Computational resources}
\label{app:compute}
We ran all of our experiments on a cluster of NVIDIA H100 (96GB of VRAM) and NVIDIA L40 (48GB of VRAM) GPUs, one run per single GPU. When freezing the BERT$_{base}$ encoder and only training the BiLSTMs and the classifiers, each training and evaluation run took \mytilde5-7 minutes, depending on the number of BiLSTM layers. For the main setting, finding the best hyperparameters involved training and testing 6,120 models, for a total of \mytilde600 GPU hours. In the full fine-tuning setting (Appendix~\ref{app:ft-sample-efficiency}), training and evaluation took \mytilde1 hour for each of the 40 base models and \mytilde2-3 hours for each of the 40 large models, for an additional \mytilde140 GPU hours.

\clearpage

\newpage
\section*{NeurIPS Paper Checklist}

\begin{enumerate}

\item {\bf Claims}
    \item[] Question: Do the main claims made in the abstract and introduction accurately reflect the paper's contributions and scope?
    \item[] Answer: \answerYes{} 
    \item[] Justification: Our abstract and introduction faithfully report the claims made in the rest of the paper, i.e. we show theoretically and empirically that normalizing biaffine scores for dependency parsing in NLP provides higher parsing performance.
    \item[] Guidelines:
    \begin{itemize}
        \item The answer NA means that the abstract and introduction do not include the claims made in the paper.
        \item The abstract and/or introduction should clearly state the claims made, including the contributions made in the paper and important assumptions and limitations. A No or NA answer to this question will not be perceived well by the reviewers. 
        \item The claims made should match theoretical and experimental results, and reflect how much the results can be expected to generalize to other settings. 
        \item It is fine to include aspirational goals as motivation as long as it is clear that these goals are not attained by the paper. 
    \end{itemize}

\item {\bf Limitations}
    \item[] Question: Does the paper discuss the limitations of the work performed by the authors?
    \item[] Answer: \answerYes{} 
    \item[] Justification: In our paper, we discuss that we limit ourselves to the task of dependency parsing in NLP. In addition, in the conclusions we mention that our approach does not use $k$-hop parsers, for example with GNNs or triaffine transformations.
    \item[] Guidelines:
    \begin{itemize}
        \item The answer NA means that the paper has no limitation while the answer No means that the paper has limitations, but those are not discussed in the paper. 
        \item The authors are encouraged to create a separate "Limitations" section in their paper.
        \item The paper should point out any strong assumptions and how robust the results are to violations of these assumptions (e.g., independence assumptions, noiseless settings, model well-specification, asymptotic approximations only holding locally). The authors should reflect on how these assumptions might be violated in practice and what the implications would be.
        \item The authors should reflect on the scope of the claims made, e.g., if the approach was only tested on a few datasets or with a few runs. In general, empirical results often depend on implicit assumptions, which should be articulated.
        \item The authors should reflect on the factors that influence the performance of the approach. For example, a facial recognition algorithm may perform poorly when image resolution is low or images are taken in low lighting. Or a speech-to-text system might not be used reliably to provide closed captions for online lectures because it fails to handle technical jargon.
        \item The authors should discuss the computational efficiency of the proposed algorithms and how they scale with dataset size.
        \item If applicable, the authors should discuss possible limitations of their approach to address problems of privacy and fairness.
        \item While the authors might fear that complete honesty about limitations might be used by reviewers as grounds for rejection, a worse outcome might be that reviewers discover limitations that aren't acknowledged in the paper. The authors should use their best judgment and recognize that individual actions in favor of transparency play an important role in developing norms that preserve the integrity of the community. Reviewers will be specifically instructed to not penalize honesty concerning limitations.
    \end{itemize}
\bigskip

\item {\bf Theory assumptions and proofs}
    \item[] Question: For each theoretical result, does the paper provide the full set of assumptions and a complete (and correct) proof?
    \item[] Answer: \answerYes{} 
    \item[] Justification: In Section~\ref{sec:math} we provide a lengthy introduction  to Result~\ref{res:singular_value_dynamics_gd}, with additional information right after it, which provides more context.
    \item[] Guidelines:
    \begin{itemize}
        \item The answer NA means that the paper does not include theoretical results. 
        \item All the theorems, formulas, and proofs in the paper should be numbered and cross-referenced.
        \item All assumptions should be clearly stated or referenced in the statement of any theorems.
        \item The proofs can either appear in the main paper or the supplemental material, but if they appear in the supplemental material, the authors are encouraged to provide a short proof sketch to provide intuition. 
        \item Inversely, any informal proof provided in the core of the paper should be complemented by formal proofs provided in appendix or supplemental material.
        \item Theorems and Lemmas that the proof relies upon should be properly referenced. 
    \end{itemize}

    \item {\bf Experimental result reproducibility}
    \item[] Question: Does the paper fully disclose all the information needed to reproduce the main experimental results of the paper to the extent that it affects the main claims and/or conclusions of the paper (regardless of whether the code and data are provided or not)?
    \item[] Answer: \answerYes{} 
    \item[] Justification: All of the hyperparameters and methodology are provided to the reader and the experiments can be easily reproduced through the provided GitHub repository.
    \item[] Guidelines:
    \begin{itemize}
        \item The answer NA means that the paper does not include experiments.
        \item If the paper includes experiments, a No answer to this question will not be perceived well by the reviewers: Making the paper reproducible is important, regardless of whether the code and data are provided or not.
        \item If the contribution is a dataset and/or model, the authors should describe the steps taken to make their results reproducible or verifiable. 
        \item Depending on the contribution, reproducibility can be accomplished in various ways. For example, if the contribution is a novel architecture, describing the architecture fully might suffice, or if the contribution is a specific model and empirical evaluation, it may be necessary to either make it possible for others to replicate the model with the same dataset, or provide access to the model. In general. releasing code and data is often one good way to accomplish this, but reproducibility can also be provided via detailed instructions for how to replicate the results, access to a hosted model (e.g., in the case of a large language model), releasing of a model checkpoint, or other means that are appropriate to the research performed.
        \item While NeurIPS does not require releasing code, the conference does require all submissions to provide some reasonable avenue for reproducibility, which may depend on the nature of the contribution. For example
        \begin{enumerate}
            \item If the contribution is primarily a new algorithm, the paper should make it clear how to reproduce that algorithm.
            \item If the contribution is primarily a new model architecture, the paper should describe the architecture clearly and fully.
            \item If the contribution is a new model (e.g., a large language model), then there should either be a way to access this model for reproducing the results or a way to reproduce the model (e.g., with an open-source dataset or instructions for how to construct the dataset).
            \item We recognize that reproducibility may be tricky in some cases, in which case authors are welcome to describe the particular way they provide for reproducibility. In the case of closed-source models, it may be that access to the model is limited in some way (e.g., to registered users), but it should be possible for other researchers to have some path to reproducing or verifying the results.
        \end{enumerate}
    \end{itemize}

\item {\bf Open access to data and code}
    \item[] Question: Does the paper provide open access to the data and code, with sufficient instructions to faithfully reproduce the main experimental results, as described in supplemental material?
    \item[] Answer: \answerYes{} 
    \item[] Justification: The code is provided. The repository includes a README.md file which guides the reader through its usage. The data for ADE, CoNLL04, and SciERC can be downloaded from the link provided in Section~\ref{sec:exp-setting:data}. As regards ERFGC, \cite{yamakata_english_2020} provide access to the corpus upon request. UD 2.2 enEWT and SciDTB are openly available.
    \item[] Guidelines:
    \begin{itemize}
        \item The answer NA means that paper does not include experiments requiring code.
        \item Please see the NeurIPS code and data submission guidelines (\url{https://nips.cc/public/guides/CodeSubmissionPolicy}) for more details.
        \item While we encourage the release of code and data, we understand that this might not be possible, so “No” is an acceptable answer. Papers cannot be rejected simply for not including code, unless this is central to the contribution (e.g., for a new open-source benchmark).
        \item The instructions should contain the exact command and environment needed to run to reproduce the results. See the NeurIPS code and data submission guidelines (\url{https://nips.cc/public/guides/CodeSubmissionPolicy}) for more details.
        \item The authors should provide instructions on data access and preparation, including how to access the raw data, preprocessed data, intermediate data, and generated data, etc.
        \item The authors should provide scripts to reproduce all experimental results for the new proposed method and baselines. If only a subset of experiments are reproducible, they should state which ones are omitted from the script and why.
        \item At submission time, to preserve anonymity, the authors should release anonymized versions (if applicable).
        \item Providing as much information as possible in supplemental material (appended to the paper) is recommended, but including URLs to data and code is permitted.
    \end{itemize}

\item {\bf Experimental setting/details}
    \item[] Question: Does the paper specify all the training and test details (e.g., data splits, hyperparameters, how they were chosen, type of optimizer, etc.) necessary to understand the results?
    \item[] Answer: \answerYes{} 
    \item[] Justification: Yes, we report information about the data we use in Section~\ref{sec:exp-setting:data}. Training and evaluation information is provided in~\Cref{sec:exp-setting:model,sec:hyperparameters,sec:exp-setting:evaluation}.
    \item[] Guidelines:
    \begin{itemize}
        \item The answer NA means that the paper does not include experiments.
        \item The experimental setting should be presented in the core of the paper to a level of detail that is necessary to appreciate the results and make sense of them.
        \item The full details can be provided either with the code, in appendix, or as supplemental material.
    \end{itemize}

\item {\bf Experiment statistical significance}
    \item[] Question: Does the paper report error bars suitably and correctly defined or other appropriate information about the statistical significance of the experiments?
    \item[] Answer: \answerYes{} 
    \item[] Justification: Yes, all of our tables and diagrams provide standard deviations and error bars, calculated from model training runs carried out with five different seeds.
    \item[] Guidelines:
    \begin{itemize}
        \item The answer NA means that the paper does not include experiments.
        \item The authors should answer "Yes" if the results are accompanied by error bars, confidence intervals, or statistical significance tests, at least for the experiments that support the main claims of the paper.
        \item The factors of variability that the error bars are capturing should be clearly stated (for example, train/test split, initialization, random drawing of some parameter, or overall run with given experimental conditions).
        \item The method for calculating the error bars should be explained (closed form formula, call to a library function, bootstrap, etc.)
        \item The assumptions made should be given (e.g., Normally distributed errors).
        \item It should be clear whether the error bar is the standard deviation or the standard error of the mean.
        \item It is OK to report 1-sigma error bars, but one should state it. The authors should preferably report a 2-sigma error bar than state that they have a 96\% CI, if the hypothesis of Normality of errors is not verified.
        \item For asymmetric distributions, the authors should be careful not to show in tables or figures symmetric error bars that would yield results that are out of range (e.g., negative error rates).
        \item If error bars are reported in tables or plots, The authors should explain in the text how they were calculated and reference the corresponding figures or tables in the text.
    \end{itemize}

\item {\bf Experiments compute resources}
    \item[] Question: For each experiment, does the paper provide sufficient information on the computer resources (type of compute workers, memory, time of execution) needed to reproduce the experiments?
    \item[] Answer: \answerYes{} 
    \item[] Justification: Appendix~\ref{app:compute} provides all relevant information with relation to the hardware used, its characteristics, and the time necessary to reproduce the experiments.
    \item[] Guidelines:
    \begin{itemize}
        \item The answer NA means that the paper does not include experiments.
        \item The paper should indicate the type of compute workers CPU or GPU, internal cluster, or cloud provider, including relevant memory and storage.
        \item The paper should provide the amount of compute required for each of the individual experimental runs as well as estimate the total compute. 
        \item The paper should disclose whether the full research project required more compute than the experiments reported in the paper (e.g., preliminary or failed experiments that didn't make it into the paper). 
    \end{itemize}
    
\item {\bf Code of ethics}
    \item[] Question: Does the research conducted in the paper conform, in every respect, with the NeurIPS Code of Ethics \url{https://neurips.cc/public/EthicsGuidelines}?
    \item[] Answer: \answerYes{} 
    \item[] Justification: Our work does not raise any ethics concerns with regard to any of the guidelines listed within the NeurIPS Code of Ethics. We have no human participants, we use publicly available data, and we believe our work does not have any broader potentially unintended societal impact.
    \item[] Guidelines:
    \begin{itemize}
        \item The answer NA means that the authors have not reviewed the NeurIPS Code of Ethics.
        \item If the authors answer No, they should explain the special circumstances that require a deviation from the Code of Ethics.
        \item The authors should make sure to preserve anonymity (e.g., if there is a special consideration due to laws or regulations in their jurisdiction).
    \end{itemize}
\clearpage  

\item {\bf Broader impacts}
    \item[] Question: Does the paper discuss both potential positive societal impacts and negative societal impacts of the work performed?
    \item[] Answer: \answerNA{} 
    \item[] Justification: As mentioned at point \#9, we do not believe that the Code of Ethics is relevant for our work, reason for which we do not discuss any societal impacts in relation with our study.
    \item[] Guidelines:
    \begin{itemize}
        \item The answer NA means that there is no societal impact of the work performed.
        \item If the authors answer NA or No, they should explain why their work has no societal impact or why the paper does not address societal impact.
        \item Examples of negative societal impacts include potential malicious or unintended uses (e.g., disinformation, generating fake profiles, surveillance), fairness considerations (e.g., deployment of technologies that could make decisions that unfairly impact specific groups), privacy considerations, and security considerations.
        \item The conference expects that many papers will be foundational research and not tied to particular applications, let alone deployments. However, if there is a direct path to any negative applications, the authors should point it out. For example, it is legitimate to point out that an improvement in the quality of generative models could be used to generate deepfakes for disinformation. On the other hand, it is not needed to point out that a generic algorithm for optimizing neural networks could enable people to train models that generate Deepfakes faster.
        \item The authors should consider possible harms that could arise when the technology is being used as intended and functioning correctly, harms that could arise when the technology is being used as intended but gives incorrect results, and harms following from (intentional or unintentional) misuse of the technology.
        \item If there are negative societal impacts, the authors could also discuss possible mitigation strategies (e.g., gated release of models, providing defenses in addition to attacks, mechanisms for monitoring misuse, mechanisms to monitor how a system learns from feedback over time, improving the efficiency and accessibility of ML).
    \end{itemize}
    
\item {\bf Safeguards}
    \item[] Question: Does the paper describe safeguards that have been put in place for responsible release of data or models that have a high risk for misuse (e.g., pretrained language models, image generators, or scraped datasets)?
    \item[] Answer: \answerNA{} 
    \item[] Justification: We do not release any new data and/or pre-trained language models.
    \item[] Guidelines:
    \begin{itemize}
        \item The answer NA means that the paper poses no such risks.
        \item Released models that have a high risk for misuse or dual-use should be released with necessary safeguards to allow for controlled use of the model, for example by requiring that users adhere to usage guidelines or restrictions to access the model or implementing safety filters. 
        \item Datasets that have been scraped from the Internet could pose safety risks. The authors should describe how they avoided releasing unsafe images.
        \item We recognize that providing effective safeguards is challenging, and many papers do not require this, but we encourage authors to take this into account and make a best faith effort.
    \end{itemize}

\item {\bf Licenses for existing assets}
    \item[] Question: Are the creators or original owners of assets (e.g., code, data, models), used in the paper, properly credited and are the license and terms of use explicitly mentioned and properly respected?
    \item[] Answer: \answerYes{} 
    \item[] Justification: We explicitly cite all of the used datasets and models and discuss their availability.
    \item[] Guidelines:
    \begin{itemize}
        \item The answer NA means that the paper does not use existing assets.
        \item The authors should cite the original paper that produced the code package or dataset.
        \item The authors should state which version of the asset is used and, if possible, include a URL.
        \item The name of the license (e.g., CC-BY 4.0) should be included for each asset.
        \item For scraped data from a particular source (e.g., website), the copyright and terms of service of that source should be provided.
        \item If assets are released, the license, copyright information, and terms of use in the package should be provided. For popular datasets, \url{paperswithcode.com/datasets} has curated licenses for some datasets. Their licensing guide can help determine the license of a dataset.
        \item For existing datasets that are re-packaged, both the original license and the license of the derived asset (if it has changed) should be provided.
        \item If this information is not available online, the authors are encouraged to reach out to the asset's creators.
    \end{itemize}

\item {\bf New assets}
    \item[] Question: Are new assets introduced in the paper well documented and is the documentation provided alongside the assets?
    \item[] Answer: \answerYes{}{} 
    \item[] Justification: We provide a well-documented GitHub repo, which is linked to in the abstract.
    \item[] Guidelines:
    \begin{itemize}
        \item The answer NA means that the paper does not release new assets.
        \item Researchers should communicate the details of the dataset/code/model as part of their submissions via structured templates. This includes details about training, license, limitations, etc. 
        \item The paper should discuss whether and how consent was obtained from people whose asset is used.
        \item At submission time, remember to anonymize your assets (if applicable). You can either create an anonymized URL or include an anonymized zip file.
    \end{itemize}

\item {\bf Crowdsourcing and research with human subjects}
    \item[] Question: For crowdsourcing experiments and research with human subjects, does the paper include the full text of instructions given to participants and screenshots, if applicable, as well as details about compensation (if any)? 
    \item[] Answer: \answerNA{} 
    \item[] Justification: Our work does not involve any human subjects in any manner.
    \item[] Guidelines:
    \begin{itemize}
        \item The answer NA means that the paper does not involve crowdsourcing nor research with human subjects.
        \item Including this information in the supplemental material is fine, but if the main contribution of the paper involves human subjects, then as much detail as possible should be included in the main paper. 
        \item According to the NeurIPS Code of Ethics, workers involved in data collection, curation, or other labor should be paid at least the minimum wage in the country of the data collector. 
    \end{itemize}

\item {\bf Institutional review board (IRB) approvals or equivalent for research with human subjects}
    \item[] Question: Does the paper describe potential risks incurred by study participants, whether such risks were disclosed to the subjects, and whether Institutional Review Board (IRB) approvals (or an equivalent approval/review based on the requirements of your country or institution) were obtained?
    \item[] Answer: \answerNA{} 
    \item[] Justification: Our work does not involve the participation of any human subjects in any manner.
    \item[] Guidelines:
    \begin{itemize}
        \item The answer NA means that the paper does not involve crowdsourcing nor research with human subjects.
        \item Depending on the country in which research is conducted, IRB approval (or equivalent) may be required for any human subjects research. If you obtained IRB approval, you should clearly state this in the paper. 
        \item We recognize that the procedures for this may vary significantly between institutions and locations, and we expect authors to adhere to the NeurIPS Code of Ethics and the guidelines for their institution. 
        \item For initial submissions, do not include any information that would break anonymity (if applicable), such as the institution conducting the review.
    \end{itemize}

\item {\bf Declaration of LLM usage}
    \item[] Question: Does the paper describe the usage of LLMs if it is an important, original, or non-standard component of the core methods in this research? Note that if the LLM is used only for writing, editing, or formatting purposes and does not impact the core methodology, scientific rigorousness, or originality of the research, declaration is not required.
    \item[] Answer: \answerNA{} 
    \item[] Justification: We only use LLMs for the construction of pandas tables and matplotlib visualizations.
    \item[] Guidelines:
    \begin{itemize}
        \item The answer NA means that the core method development in this research does not involve LLMs as any important, original, or non-standard components.
        \item Please refer to our LLM policy (\url{https://neurips.cc/Conferences/2025/LLM}) for what should or should not be described.
    \end{itemize}

\end{enumerate}

\end{document}